\journal{Journal of Big Data Research}
\theoremstyle{plain}
\newtheorem{theorem}{Theorem}
\newtheorem{lemma}{Lemma}
\theoremstyle{definition}
\begin{document}

\begin{frontmatter}

\title{Random Manifold Sampling and Joint Sparse Regularization for Multi-label Feature Selection}

\author[1]{Haibao Li \corref{mycorrespondingauthor}}
\author[2]{Hongzhi Zhai}

\address[1]{College of Sciences,Northeastern University,Shenyang,110819,China}
\address[2]{Business School,Shandong University,Weihai,264209,China}

\cortext[mycorrespondingauthor]{Corresponding author. Email address: \url{lihaibao98@163.com.}}

\begin{abstract}
  Multi-label learning is usually used to mine the correlation 
between features and labels, 
and feature selection can retain as much information as possible 
through a small number of features.
$\ell_{2,1}$ regularization method can get sparse coefficient matrix, 
but it can not solve multicollinearity problem effectively. 
The model proposed in this paper can obtain the most relevant few features 
by solving the joint constrained optimization problems of $\ell_{2,1}$ 
and $\ell_{F}$ regularization.
In manifold regularization, we implement random walk strategy based on 
joint information matrix, and get a highly robust neighborhood graph.
In addition, we given the algorithm for solving the model and proved 
its convergence.
Comparative experiments on real-world data sets show that 
the proposed method outperforms other methods.
\end{abstract}

\begin{keyword} 
  Feature selection \sep
  Joint Sparsity Regularization \sep 
  Multi-label learning \sep 
  Manifold learning \sep 
  Random walk
\MSC[2020] Primary 62H12 \sep
Secondary 62F12
\end{keyword}

\end{frontmatter}

\section{Introduction\label{sec:1}}

Different from multi-classification problem, 
an instance can have multiple labels at the same time in multi-label learning.
As Zhihua Zhou and Zhiling Cai et al. studied 
in \cite{Zhang2007-ML-KNN,Zhang2014-LIFT,Cai2018-MSSL,Weng2018}, 
one input data will correspond to multiple outputs.
It is facing some problems that need to be solved urgently, 
although multi-label learning has been very common in production and life.
The first problem is that we often assume that 
labels are independent of each other when we make statistical inference, 
but in fact there is a strong correlation between different labels.
The second problem is that multi-label data usually 
has very high feature dimension and label dimension, 
which leads to a sharp increase in learning cost and labeling cost.
The third problem is that the number of samples on each label varies greatly. 
This problem is called label imbalance, 
which will lead to the failure of some labels, 
resulting in waste of information \cite{Tahir2012,Zhang2020}.

When using the information of labels, 
previous studies often explore the high-order association between labels. 
Based on the association between labels, 
these methods can be roughly divided into three categories:
(1) The first-order method, 
which always assumes that labels are independent of each other \cite{Dembczynski2012}.
(2) Second-order method, 
which considers the correlation between labels \cite{Weng2018}.
(3) There are also high-order methods. 
Although they take more account of the interaction between labels, 
they are seldom used in practice because of their complexity \cite{Kang2006-CLP}.

If the dimension of data is too high, 
it will lead to "dimension disaster" \cite{Donoho2000, Johnstone2009}. 
Therefore, in order to avoid over-fitting in the process of model training, 
dimension reduction is usually carried out in the data preprocessing stage.
Dimension reduction methods can be roughly divided into two categories: 
feature extraction and feature selection.
The former is generally an unsupervised learning method, 
which obtains a few new features by combining, transforming and 
spectral decomposition of the original features.
These new features greatly retain the information of the original features, 
but lost the meaning of the original features.
For example, a small number of new features obtained by PCA 
after linear combination of original features are independent of each other, 
but the new features are not interpretable.
Through eigenmap and neighborhood graph, 
LPP \cite{He2004-LPP} preserves the local structure of data 
while reducing dimensions.
Because of the poor interpretability of feature extraction methods, 
they are usually only used as intermediate results or visualization.
Feature selection method directly selects 
a small number of high-value features from the original features 
by some means to achieve dimension reduction.
Feature selection methods can generally be divided into three types: 
filter, wrapper and embedding. 
In the first two methods, 
the process of feature selection and model training is separate, 
so the model does not work very well on the task, 
and the computational overhead is relatively high.
However, embedding method automatically completes feature selection 
in the process of training model, 
and its performance and computational overhead are usually superior to 
the former two. 
MCFS \cite{Cai2010-MCFS}, RFS \cite{Nie2010-RFS}, 
LASSO \cite{Tibshirani1996-LASSO} and 
other embedding methods generate sparse matrices for feature selection. 
They have been widely used in various tasks and have good performance.

The performance of global-based feature selection method 
on ultra high dimensional data is not always satisfactory, 
because the distribution of data in high-dimensional space is 
usually very complex.
By paying attention to the local structure of data, 
manifold learning method \cite{Cai2018-MSSL, Zhang2019-MDFS, He2004-LPP} is 
superior to other methods.
Nevertheless, manifold learning can't avoid the problem of "short circuit", 
especially when the number of neighbors is not suitable.

When we consider the correlation between features, 
Hyunki Lim \cite{LimH2021} builds a model to analyze 
the correlation between label pairs, 
and obtains a subset of features with low correlation.
Arthur and Robert et al. proposed \cite{Hoerl1970-Ridge} ridge regression, 
which obtained a subset of variables with low correlation by 
punishing highly correlated variables.
However, even when variables are highly correlated, 
they may get better results when working together than 
when working with several independent variables.
The elastic net method \cite{Zou2005-E-net} combines 
two or more features of a set of related features for consideration, 
so that the model has the ability to evaluate 
the collaborative work of features and 
avoids breaking the situation of multi-feature collaborative work.

Since the predicted value on labeled data should be greater than 
the predicted value on unlabeled data, 
we take the sum of squares of prediction errors as the loss function.
In particular, when considering the neighborhood graph of manifold sampling, 
we use the joint information and 
implement the multi-step random walk strategy to 
construct the graph Laplacian matrix, 
so that the model can retain a highly robust local structure.
On the basis of the above, 
we combine $\ell_{F}$ and $\ell_{2,1}$ regularization sum to 
obtain a highly sparse coefficient matrix, 
so as to realize feature selection.

The rest of this article is organized as follows:
The related work is introduced in Section 2. 
In section 3, 
the model is established and the expression of the model is derived. 
In section 4, 
the algorithm for solving the model is given and 
the convergence of the algorithm is proved. 
In Section 5, 
data experiments are designed and the performance of 
the proposed algorithm on different data sets is demonstrated. 
Section 6 summarizes the work of this paper.

\section{Related works\label{sec:2}}

Predecessors have done a lot of research on 
how to make full use of label information.
In ML-KNN, Zhihua Zhou et al. \cite{Zhang2007-ML-KNN} predicted 
the label of new samples through the information of neighboring samples and 
posterior probability.
Feng Kang and Wei Weng et al. make full use of 
the relevant information between labels through 
the label propagation algorithm \cite{Kang2006-CLP, Weng2018, Wang2013}, 
and get considerable results.
When measuring the similarity between instances, 
Hamers and Kosub et al. \cite{Hamers1989, Kosub2019} suggest 
using Jaccard index. 
In their research, the author gives the probability basis of Jaccard index 
and shows its excellent performance.
However, the above methods can not make full use of the sample information.
For example, 
instance1:$x_1=(2,1,0,1), y_1=(0,1,1)$, 
instance2:$x_2=(2,1,1,1), y_2=(1,0,0)$, 
instance3:$x_3=(2,0,0,1), y_3=(0,0,1)$,
instance1 is more similar to instance2 if we only consider features, 
but if we measure similarity by labels, 
instance1 and instance3 are more similar.
Balasubramanian \cite{Balasubramanian2002-Isomap} 
called this phenomenon "short circuit", 
and based on this, put forward the concept of manifold learning.
n order to avoid the "short circuit" phenomenon, 
Roweis proposed LLE method in \cite{Roweis2000-LLE}, 
which approximates the global nonlinear structure by local linear embedding.
LPP method \cite{He2004-LPP} firstly points out that 
high-dimensional data is projected into low-dimensional space, 
so as to realize dimension reduction by manifold learning method. 
A large number of studies have verified the effectiveness of LPP method.
Quanmao Lu et al. \cite{Lu2018} applied manifold embedding method to 
unsupervised learning dimension reduction and obtained excellent results.
Miao Qi and Ronghua Shang et al. \cite{Shang2020, Qi2018} 
put forward the theory of projecting the loss function to 
the low-dimensional subspace and obtaining the optimal subspace 
by matrix factorization.
However, these manifold methods are all based on KNN to 
construct neighborhood graphs, 
so the stability of local structure is easily affected by 
the number of neighbors.
Laurensvan der Maaten and Xiaokai Wei et al. \cite{Van2008-t-SNE, Wei2016} 
greatly improves the stability of manifold structure by 
adopting stochastic neighborhood embedding (SNE) method.
Different from punishing the similarity between neighbors, 
Aiping Huang et al. \cite{Huang2021} pay more attention to 
the linear combination of similarity vectors of samples, 
and they take the $\ell_{F}$-norm of similarity residual matrix 
as the loss function.
All the above studies are unsupervised learning methods, 
so they can't make effective use of label information when labels are given.
We noticed that Ronghua Li and Fatemeh Vahedian et al. 
\cite{Li2015, Vahedian2017,Li2019} studied the excellent properties of 
random walk in extracting the similarity of instances. 
Inspired by these studies, we introduce random walk strategy into 
the similarity sampling of manifold structure, 
which makes the model more robust.

RFS method \cite{Nie2010-RFS} adds parameter regularization term to 
the multi-label learning model, 
which makes the coefficient matrix sparse and realizes feature selection. 
Similar to $\ell_{1}$-norm, $\ell_{2,1}$-norm does not consider the 
synergy of multiple features, 
and it tends to choose one of a group of highly related features.
Ridge regression \cite{Hoerl1970-Ridge} abandons the unbiasedness of 
estimated parameters, which makes the selected features have low correlation.
In contrast, the elastic net \cite{Zou2005-E-net} method weighs 
the effects of multiple features more comprehensively, 
thus training a more capable model.
Recently, Mohammad Ghasem Akbari and Gholamreza Hesamian \cite{Akbari2019} 
proposed a semiparametric model, 
which applies kernel smoothing and elastic penalty methods to 
fuzzy prediction and feature selection of regression models.
Inspired by the idea of elastic net, Mokhtia et al. \cite{Mokhtia2021} 
studied a series of penalty models based on fuzzy correlation, 
these models can output sparse coefficient vectors 
through dual regularization to complete feature selection.
Based on the above work, we compromised $\ell_{2,1}$ regularization 
and $\ell_{F}$ regularization. That is,
when a certain number of features are introduced into the model, 
the model begins to consider the synergy of multiple features.

Based on the above questions, 
the main contributions of this paper are as follows:
\begin{itemize}
    \item Under the background of multi-label learning, 
    a joint sparse regularization term is proposed, 
    which can obtain a highly sparse coefficient matrix 
    and retain only the features with low correlation.
    
    \item A joint similarity matrix is constructed 
    by combining the similarity information of features and labels, 
    which effectively alleviates the "short circuit" phenomenon.
    
    \item Using random walk strategy, 
    the neighborhood graph with high sparsity 
    and robustness can be generated adaptively.
\end{itemize}

\section{Methods\label{sec:3}}

Let $ X = (x_{1}, ..., x_{n})^{T} \in \mathbb{R}^{n \times p} $ 
be the feature matrix and 
$ Y = (y_{1},...,y_{n}) ^{T} \in \mathbb{R}^{n \times m} $ 
be the label matrix, 
where $ x_{i} = (x_{i1}, ..., x_{ip}) $, 
$ y_{i} = (y_{i1}, ..., y_{im}) $, $ (x_{i}, y_{i}) $ is called an sample.
$ T $ represents the transpose of a matrix or vector,  
$ {\| * \| }_{2} $ is the $ \ell_{2} $-norm of vector,
$ {\| * \| }_{F} $ and $ {\| * \|}_{2, 1} $ are the Frobenius norm 
($ \ell_{F}-norm $) and $ \ell_{2,1} $-norm of matrix respectively. 
Here, $ {\| A \| }_{F} = tr( AA^{T}) $, 
$ {\| A \| }_{2, 1} = \sum \limits_{i} {\| A_{i} \|}_{2} $,
$ A_{i} $ is the $ i $-th row of matrix $ A $.

The prediction value of labeled samples should be larger than 
that of unlabeled samples, 
and the smaller the prediction error, 
the stronger the fitting ability of the model. 
Therefore, we take the sum of squares of errors as the basic loss function.

\begin{equation} \label{formula1}
    \min \limits_{W,b} \frac{1}{2}{\| X W + 1_{n} b - Y \|}_{F}^{2}
\end{equation}

where $ W \in \mathbb{R}^{p \times m} $ is the coefficient matrix, 
$ b \in \mathbb {R}^{1 \times m} $ is the bias vector, 
and $ 1_{n} $ is the column vector with all 1 element.

Samples and samples in the low-dimensional space should 
maintain the nature of adjacency, 
if they have adjacency in the original space. 
That is, 
${\Vert x_{i}-x_{j} \Vert}_{2} \propto {\Vert x_{i} W- x_{j} W \Vert}_{2}$.
In order to preserve this local structure, 
we add a manifold regularization term to the model.

\begin{equation} \label{formula2}
    \begin{split}
    LR(W) & =\frac{1}{2} \sum \limits_{i,j}{\| x_{i} W - x_{j} W\|}_{2}^{2} S_{ij} 
    = \frac{1}{2} \sum \limits_{i,j} ( x_{i} W - x_{j} W ) ( x_{i} W - x_{j} W )^{T}  \\
    & = \sum \limits_{i} x_{i} W ( x_{i} W )^{T} P_{ii} - \sum \limits_{i,j} x_{i} W ( x_{j} W )^{T} S_{ij} 
    = tr( W^{T} X^{T} L X W )
    \end{split} 
\end{equation}
where $ L = P - S $, $ P $ is a diagonal matrix, 
and $ P_{ii} = \sum \limits_{j} S_{ij} $.

\begin{figure}[htbp]\label{fig1}
    \centering
    \includegraphics[scale=0.25]{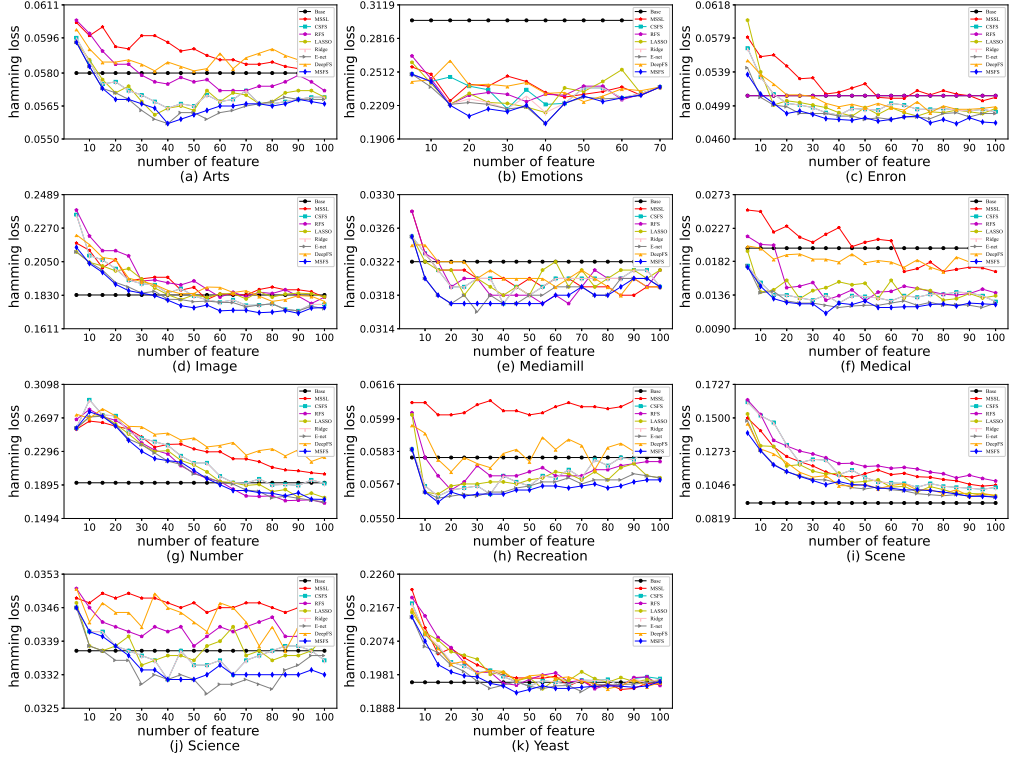}
    \caption{Hamming Loss comparisions of 8 feature selection algorithms on 11 datasets.}
\end{figure}

Usually, the neighborhood graph in the manifold regularization term is 
obtained directly through 
KNN \cite{Cai2010-MCFS,Cai2018-MSSL,He2004-LPP,Roweis2000-LLE,Zhang2019-MDFS}, 
so it is difficult to avoid the occurrence of "short circuit", 
especially on the data set like "Swiss Roll".
In supervised learning, especially in multi-label learning, 
the cost of obtaining labels is very expensive, 
but methods such as MSSL and MDFS only use label information once 
when training models,
which leads to the waste of label information.
Different from the above method, 
we first construct a joint similarity matrix using 
the information of labels and features, 
and then implement multi-step random walk based on 
this matrix to obtain the neighborhood graph. 
Joint similarity matrix can avoid "short circuit" phenomenon 
to a great extent, 
and random walk strategy can further improve the robustness of 
neighborhood graph to outliers. 
In addition, we only need to select a small number of random walk steps 
to get a highly sparse neighborhood graph, 
which greatly improves the computational efficiency.

The specific steps to obtain the neighborhood graph are as follows:

\noindent
    \textbf{step1}: Calculate the Euclidean distance matrix betwwen instances based on 
the features of instance $ Dist=(d_{ij})$, 
where $ d_{ij}={\| x_{i}-x_{j} \|}_{2}, \quad \forall i,j=1,2,\dots,n $.

\noindent
    \textbf{step2}: Calculate the Gaussian adjacency weight matrix 
$ V = (v_{ij}) $, 
where $ v_{ij}  =e^{ - \frac{d_{ij}^{2}}{ \sigma^{2}}},\quad \forall i,j=1,2,...,n $.

\begin{figure}[htbp]\label{fig2}
    \centering
    \includegraphics[scale=0.25]{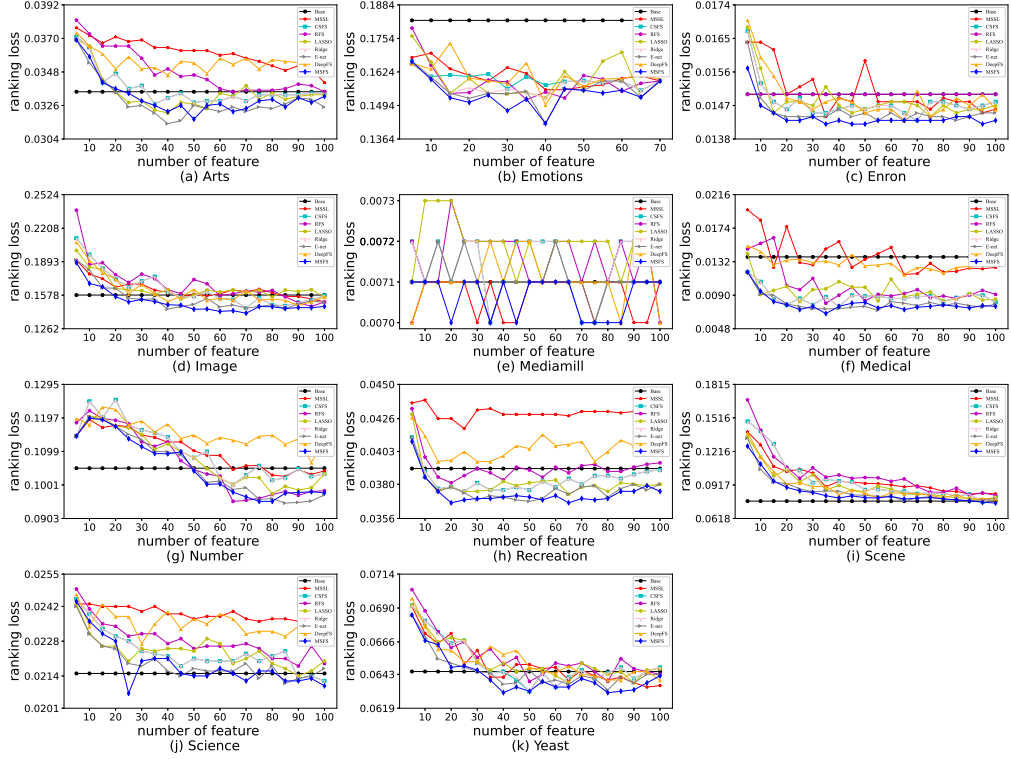}
    \caption{Ranking Loss comparisions of 8 feature selection algorithms on 11 datasets.}
\end{figure}

\noindent
    \textbf{step3}: Calculate Jaccard Index matrix $ R = (r_{ij}) $ between 
    samples based on labels. 

\begin{equation}\label{formula3}
    r_{ij}=
    \begin{cases}
        \frac{y_i y_j^T}{y_i y_i^T + y_j y_j^T - y_i y_j^T}, & i \neq j \\
        0, & otherwise.
    \end{cases}
\end{equation}

\noindent
    \textbf{step4}: Calculate the joint similarity matrix $ T = V \odot R $, 
the symbol $ \odot $ represents the Hadamard product, 
that is $ T_{ij} = V_{ij} \cdot R_{ij},\quad \forall i,j = 1,2,\dots,n $.

\noindent
\textbf{step5}: Starting from node $ i $, 
implement one-step random walk according to probability $ P_{i} $, 
where $ P = D^{-1}T $, $ D $ is a diagonal matrix and 
satisfies $ D_{ii} = \sum \limits_{j} T_{ij},\quad \forall i=1,2,\dots,n $. 
If it reaches the node $ j $, 
then implement a one-step random walk with probability $ P_{j} $ and 
repeat $ k $ times like this.

The counting matrix $ C $ can be obtained by recording the times that 
it passes through other nodes when starting from node $ i $. 
We only need to make $ S = (C + C^{T}) / 2 $ to get a 
highly robust neighborhood graph based on the joint structure 
and random walk strategy. 
Algorithm description please refer to Algorithm 1.

\begin{table}[htbp]\label{algorithm1}
  
    \centering
    \begin{tabular}{l}
    \hline
    \textbf{Algorithm 1}: Neighborhood Graph Algorithm (DFS and BFS) \\ 
    \hline
    \textbf{Input}: feature matrix $X\in\mathbb{R}^{n\times p}$, 
    label matrix $Y\in \mathbb{R}^{n \times m}$. \\
    \textbf{Output}: Neighborhood Graph $ S $.   \\
    Calculate one-step transform probability matrix $ P $.  \\
    Initialize $ C $ as a $ n \times n $ zero matrix.  \\
    \textbf{For} $ i=1:n $  \\
    \qquad Set $ x_{0} = x_{i} $, $ P_{0} = P_{i} $  \\
    \qquad \textbf{If DFS}:  \\
    \qquad \qquad \textbf{For} $ s=1:k $  \\
    \qquad \qquad \qquad Implement one-step random walk that starts 
    from $ x_{0} $ with the possibility of $ P_{0} $.  \\
    \qquad \qquad \qquad If it reached node $ x_{j} $, 
    then $ c_{ij} = c_{ij} + 1 $.  \\
    \qquad \qquad \qquad Set $ p_{ji} = 0 $ and renormalized $ P_{j} $, 
    so that $ \sum \limits_{i} p_{ji} = 1 $.  \\
    \qquad \qquad \qquad Set $ x_{0} = x_{j}, P_{0} = P_{j} $  \\ 
    \qquad \textbf{If BFS}:  \\
    \qquad \qquad \textbf{For} $ s=1:k $  \\
    \qquad \qquad \qquad Implement one-step random walk that starts 
    from $ x_{0} $ \\
    \qquad \qquad \qquad with the possibility of $ P_{0} $.  \\
    \qquad \qquad \qquad If it reached node $ x_{j} $, 
    then $ c_{ij} = c_{ij} + 1 $.  \\
    Set $S = (C + C^{T}) / 2$  \\
    \hline
    \textbf{Notes}: DFS and BFS are abbreviations for Depth-First Search and Breadth-First Search, respectively.
    \end{tabular}
\end{table}

\begin{figure}[htbp]\label{fig3} 
    \centering
    \includegraphics[scale=0.25]{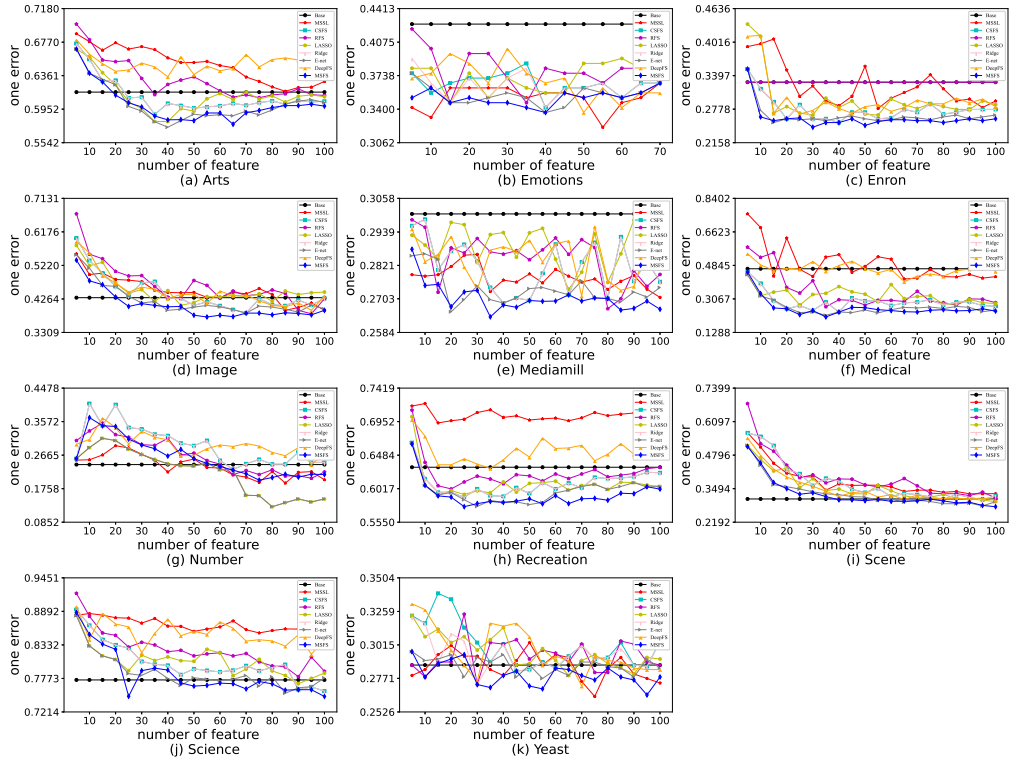}
    \caption{One Error comparisions of 8 feature selection algorithms on 11 datasets.}
\end{figure}

The neighborhood graph obtained by algorithm 1 not only has symmetry 
and sparsity, but also has strong robustness to outliers. 
In fact, carrying out random walks is equivalent to carrying out 
weighted random sampling, 
in which the weight of abnormal samples is 0 or very small.

By applying $\ell_{2,1}$ regularization to the coefficient matrix, 
it will become row sparse, thus realizing feature selection.

\begin{equation} \label{fomula4}
    SR_{1}(W) = {\| W \|}_{2,1}
\end{equation}

On the other hand, 
multicollinearity of features usually exists in high dimensional data sets. 
However, this effect can be reduced by imposing a quadratic penalty term, 
so we consider using Frobenius regularization term in the model 
at the same time.

\begin{equation} \label{formula5}
    SR_{2}(W) = { \| W \|}_{F}^{2}
\end{equation}

\begin{figure}[htbp]\label{fig4} 
    \centering
    \includegraphics[scale=0.25]{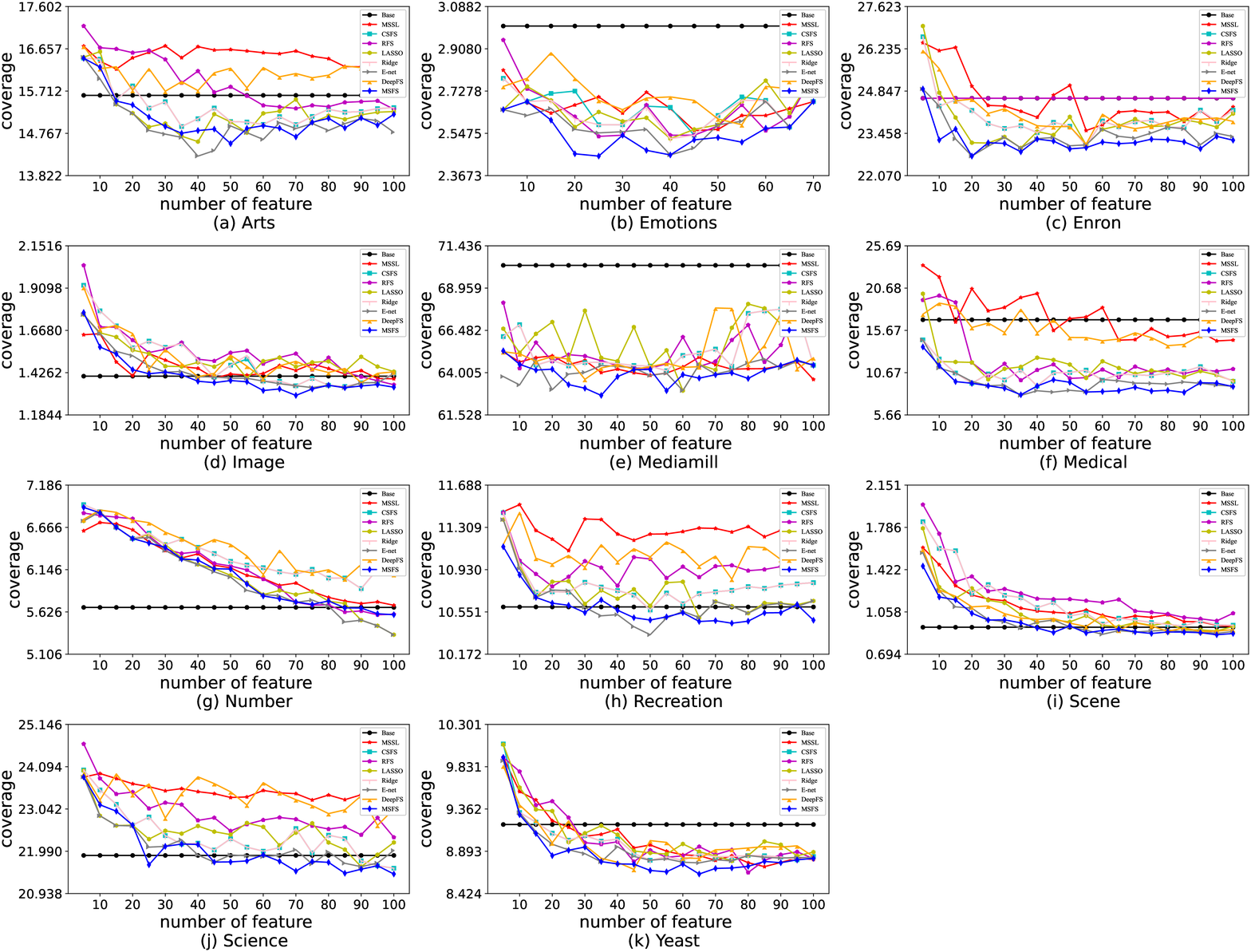}
    \caption{Coverage comparisions of 8 feature selection algorithms on 11 datasets.}
\end{figure}

Inspired by the elastic net method, we construct a joint regularization term 
by introducing the factor $\rho$, 
which can balance the effects of $SR_{1}(W)$ and $SR_{2}(W)$.
The model considers not only the action of a single feature, 
but also the joint action of a group of features.

\begin{equation} \label{formula6}
    \begin{split}
    SR(W) & = \rho SR_{1}(W) + (1-\rho)SR_{2}(W) \\
    & = \rho {\| W \|}_{2,1} + (1 - \rho){\| W \|}_{F}^{2}
    \end{split}
\end{equation}

\noindent
where $ {\| W \|}_{2,1} $ is not continuous and differentiable, 
so we need to find an approximate solution. Due to

\begin{equation} \label{formula7}
    {\| W \|}_{2,1} = \sum \limits_{i=1}^{p} \sqrt{\sum \limits_{j=1}^{m} W_{ij}^{2}} = \sum \limits_{i=1}^{p} \Vert w_{i} \Vert_{2} = \sum \limits_{i=1}^{p}(w_{i} w_{j}^{T})^{\frac{1}{2}}
\end{equation}

and when $ \forall i = 1, 2, \dots, p $ satisfies $ w_{i} \neq 0 $. Therefore, the following formula always holds

\begin{equation} \label{formula8}
    \begin{split}
      \frac{\partial {\| W \|}_{2,1}}{ \partial W} 
      & = \frac{\partial {\sum \limits_{i=1}^{p} (w_{i} w_{j}^{T})^{\frac{1}{2}}}}{\partial {w_{j}}} = 2UW \\
      & = \frac{\partial {tr(W^{T} U W)}}{\partial {W}}
    \end{split}
\end{equation}

\noindent
That is $\Vert W \Vert_{2,1} - tr(W^{T} U W)$ is an constant that can be 
omited in object function, 
so approximately there can be

\begin{equation}\label{formula9}
  {\| W \|}_{2,1} = tr(W^{T} UW)
\end{equation}

Here $ U $ is a diagonal matrix, 
and its $ i $-th element is $ U_{ii} = 1 / max \{ 2{\| w_{i} \|}_{2}, \varepsilon\} $, 
$ \varepsilon $ is a sufficiently small positive constant 
such that the denominator is never equal to 0.

\begin{figure}[htbp]\label{fig5}
    \centering
    \includegraphics[scale=0.25]{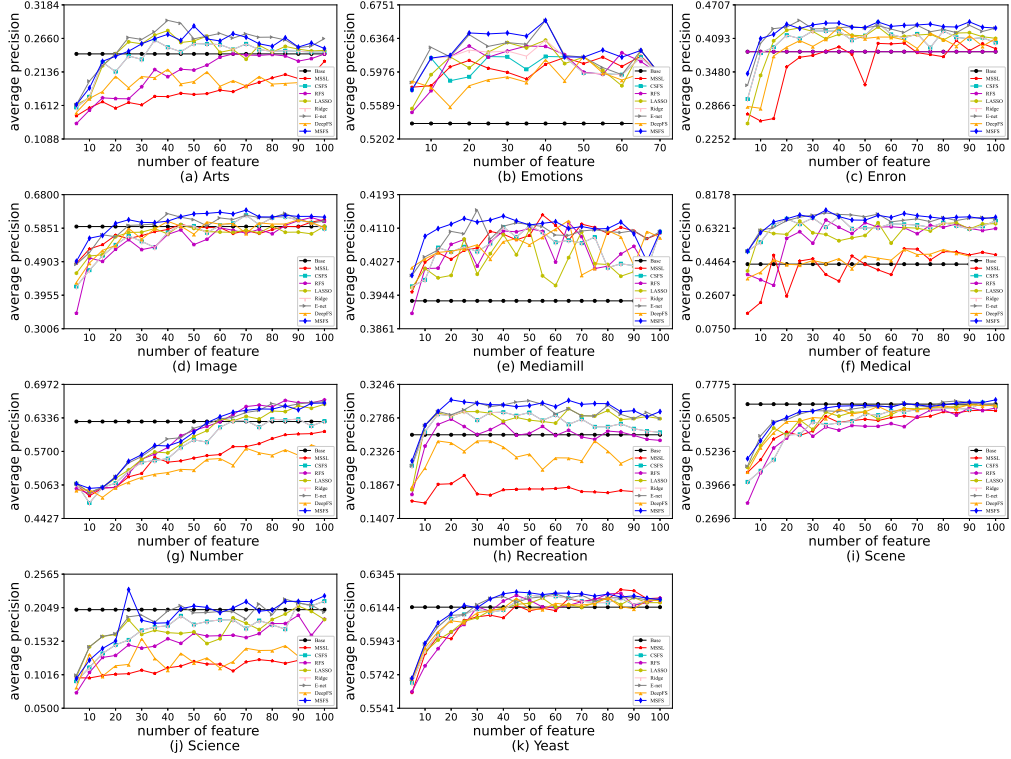}
    \caption{Average Precision comparisions of 8 feature selection algorithms on 11 datasets.}
\end{figure}

Based on the above, we get the following optimization problem
\begin{equation} \label{formula10}
    \begin{split}
    \min \limits_{W,b} 
    & \frac{1}{2}{\| X W + 1_{n} b - Y \|}_{F}^{2} + \frac{\alpha}{2} \cdot \frac{1}{2} \sum \limits_{i,j}{\| x_{i} W - x_{j} W \|}_{2}^{2} S_{ij} + \frac{\beta}{2} ( \rho {\| W \|}_{2,1} + ( 1 - \rho){\| W \|}_{F}^{2})
    \end{split}
\end{equation}

Using the trace of the matrix to reformulate the above formula as follows
\begin{equation} \label{formula11}
    \begin{split}
    \min \limits_{W,b}
    & \frac{1}{2} tr(W^{T} X^{T} X W) + tr(b^{T} 1_{n}^{T} X W) - tr(Y^{T} X W) + \frac{1}{2} tr(b^{T} 1_{n}^{T} 1_{n} b) - tr(Y^{T} 1_{n} b) \\
    + & \frac{1}{2}tr(Y^{T} Y) + \frac{\alpha}{2}tr(W^{T} X^{T} L X W) +  \frac{\beta \rho}{2}tr(W^{T} U W) + \frac{\beta \left( 1 - \rho \right)}{2}tr(W^{T}W)
    \end{split}
\end{equation}

\section{Optimization algorithm and analysis\label{sec:4}}
\subsection{Optimization algorithm}

May wish to write the formula (9) as $ f(W,b) $, 
then we can solve b and W by using alternating least squares(\textbf{ALS}) methods.

\noindent
\textbf{(1) Fixed W and U, Update b}
\begin{equation} \label{formula12}
    b = \frac{1}{n} ( 1_{n}^{T}Y - 1_{n}^{T} X W )
\end{equation}

\noindent
\textbf{(2) Fixed U and b, Update W}
\begin{equation} \label{formula13}
    W = {(X^{T} (H + \alpha L) X + \beta ( 1 - \rho) I_{p} + \beta \rho U)}^{-1} X^{T} H Y
\end{equation}
where $ H = I_{n} - \frac{1}{n} 1_{n} 1_{n}^{T} $.

\noindent
\textbf{(3) Fixed W and b, Update U}
\begin{equation} \label{formula14}
  U_{i,i} = \frac{1}{max\{ 2 \| w_{i} \|_{2}, \varepsilon \}}
\end{equation}

The description of the optimization algorithm corresponding to 
the above process please refer to the Algorithm2.

\begin{table}[htbp] \label{algorithm2}
    \centering
    \begin{tabular}{l}
    \hline
    \textbf{Algorithm 2}: Random \textbf{M}anifold Sampling and Joint \textbf{S}parse Regularization 
    for Multi-label \textbf{F}eature \textbf{S}election  \\ 
    \hline
    \textbf{Input:} feature matrix: $ X \in \mathbb{R}^{n \times p} $, 
    label matrix: $ Y \in \mathbb {R}^{n \times m} $, \# of select feature: $ l $, parameter: $ \alpha, \beta, \rho $. \\
    \textbf{Output}: the set of select features: $ SF $.  \\
    Calculate the neighborhood graph S according to \textbf{Algorithm 1}.  \\
    Calculate the graph Laplacian matrix of the sample graph $ L $.  \\
    Calculate the centering matrix $ H = I_n - 1_n 1_n^T / n$.  \\
    Set $ t = 0, \quad  \varepsilon = 1 \times 10^{-64} $.  \\
    Random initializing $ U $ as a diagonal matrix.  \\
    \textbf{While} not convergence  \\
    \qquad Update $ W $ as $ W^{(t+1)} $ according to the formula (\ref{formula13}).  \\
    \qquad Update $ U $ as $ U^{(t+1)} $ according to the formula (\ref{formula14}).  \\
    \qquad $ t = t + 1 $.  \\
    \textbf{Until} convergence  \\
    Calculate features weight vector $ scores $, 
    where $ scores_{i} = { \| w_i \|}_2 $.  \\
    Select the top $ l $ features with the highest score.  \\ 
    \hline
    \end{tabular}
\end{table}

\begin{figure}[htbp]\label{fig6} 
  \centering
  \includegraphics[scale=0.25]{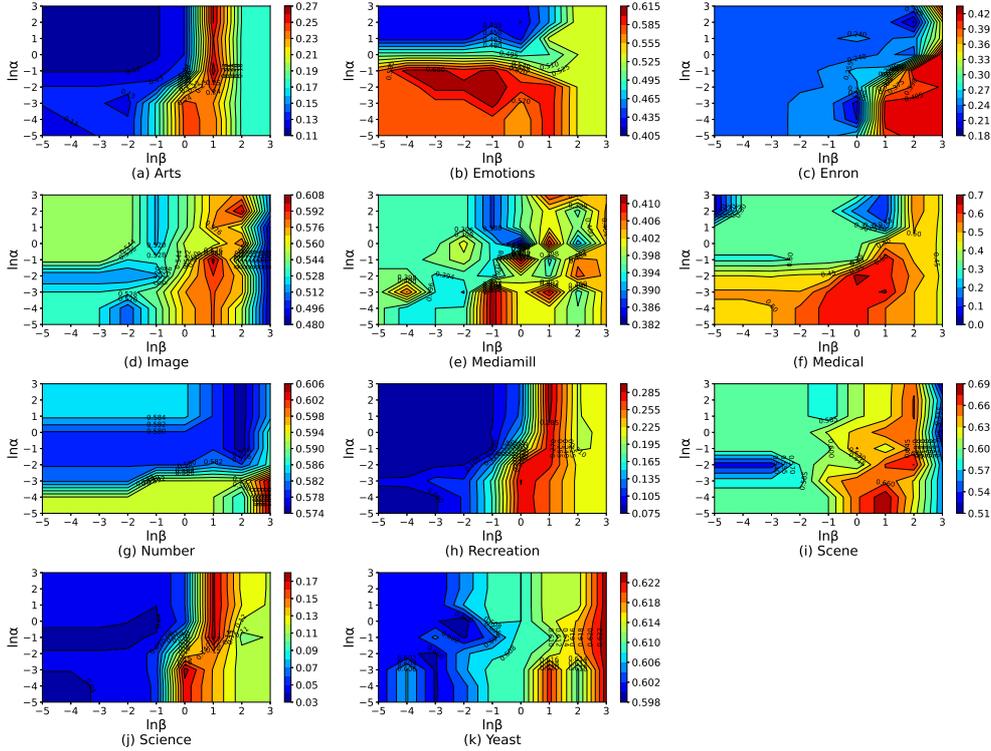}
  \caption{Average Precision of MSFS when $\alpha$ and $\beta$ varies from $10^{-5}$ to $10^{3}$, the number of feature set as 50.}
\end{figure}

\subsection{Complexity analysis}

We denote $n, p$ and $ m $ as the number of instances, 
features and labels respectively, 
$ k $ and $ t $ are the number of random walk steps and iterations.
In general, $ n > p > m $, $ k = 80 $, and $ t \leq 50 $  are satisfied. 
Therefore, the complexity of algorithm 1 is $ O (( p + m ) n^{2} + ( p + m + k) n) $, 
and the computational complexity of each iteration of algorithm 2 is 
$ O(p n^{2} + (p^{2} + m p) n + p^{3} + p^{2} + m p) $.
Therefore, the total complexity of our algorithm is 
$ O((p + m) n^{2} + (p + m + k) n) + O(t ( p n^{2} + (p^{2} + m p) n + p^{3} + p^{2} + m p)) = O(n^{2}p + np^{2} + p^{3}) $.

\begin{figure}[htbp]\label{fig7} 
  \centering
  \includegraphics[scale=0.25]{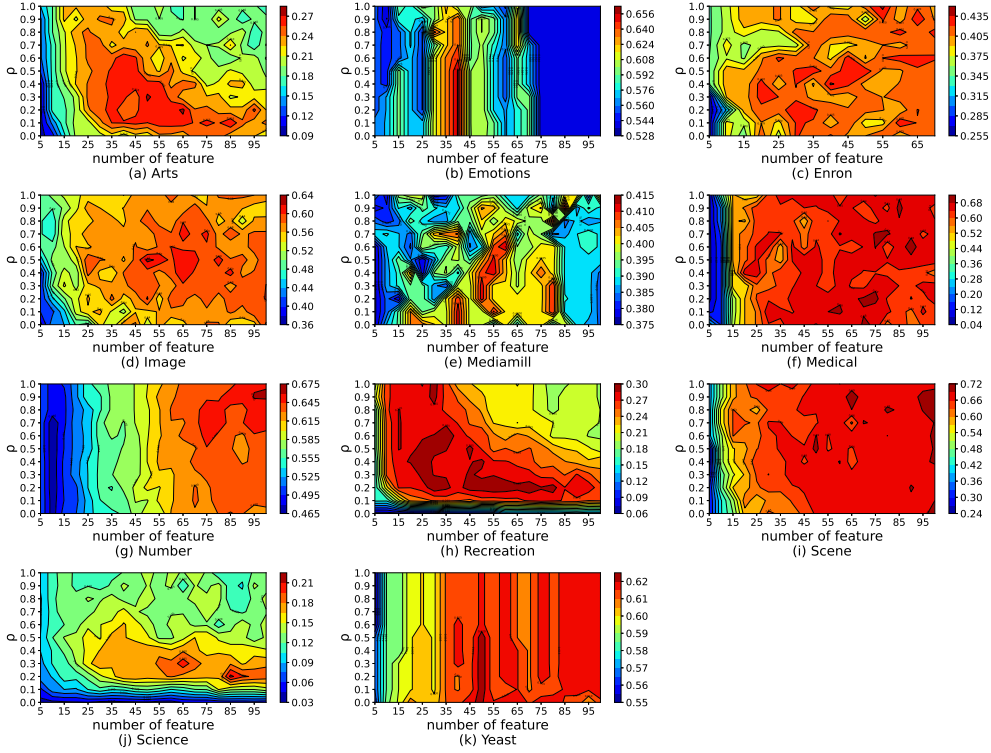}
  \caption{Average Precision of MSFS when $\rho$ varies from 0.0 to 1.0 and the number of feature varies from 5 to 100 (70 for emotions).}
\end{figure}

\subsection{Convergence analysis}

In this section, we will prove the convergence of Algorithm 2. 
Before the proof, we need to introduce a lemma.

\begin{lemma} 
  For any $ a > 0, b > 0 $, the following inequality always holds
  \begin{equation} \label{formula15}
    \sqrt{a} - \frac{a}{2 \sqrt{b}} \leq \sqrt{b} - \frac{b}{2 \sqrt{b}}
  \end{equation}
\end{lemma}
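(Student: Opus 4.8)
The plan is to reduce the claimed inequality to the elementary fact that a square is nonnegative. First I would simplify the right-hand side: since $b>0$ we have $\dfrac{b}{2\sqrt b}=\dfrac{\sqrt b}{2}$, hence $\sqrt b-\dfrac{b}{2\sqrt b}=\dfrac{\sqrt b}{2}$, and the assertion becomes
\[
\sqrt a-\frac{a}{2\sqrt b}\le\frac{\sqrt b}{2}.
\]

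Next, because $b>0$ the factor $2\sqrt b$ is strictly positive, so multiplying both sides by $2\sqrt b$ preserves the direction of the inequality and turns it into $2\sqrt{ab}-a\le b$, i.e.
\[
0\le a-2\sqrt{ab}+b=(\sqrt a-\sqrt b)^{2}.
\]
This last statement holds for all $a,b>0$ (indeed for all $a,b\ge 0$) since the square of a real number is nonnegative; reversing the algebraic steps then yields the lemma, with equality exactly when $a=b$.

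There is essentially no obstacle here: the only points deserving a word of care are that the hypotheses $a>0$, $b>0$ guarantee the square roots and the divisions are well defined, and that the multiplier $2\sqrt b$ is positive so the sense of the inequality is unchanged. I would also note that the inequality is precisely the statement that the concave map $t\mapsto\sqrt t$ lies below its tangent line at $t=b$, which is the geometric reason it is the right tool for bounding the surrogate $tr(W^{T}UW)$ against $\|W\|_{2,1}$ and thereby establishing the monotone decrease of the objective in the convergence proof for Algorithm 2.
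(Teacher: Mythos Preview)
Your proof is correct and complete; the reduction to $(\sqrt a-\sqrt b)^2\ge 0$ is the standard argument, and your remarks about positivity of $2\sqrt b$ and the concavity interpretation are both apt. Note that the paper itself states the lemma without proof, so there is no approach to compare against---your argument fills that gap cleanly.
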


\begin{theorem}
  Algorithm 2 gives the optimal solution with convexity.
\end{theorem}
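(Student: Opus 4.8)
The plan is to establish the statement in three steps: first, that the objective $f(W,b)$ in \eqref{formula10} is convex (and strictly convex in $W$); second, that the iterates produced by Algorithm~2 drive $f$ down monotonically; and third, that the monotone descent together with convexity forces convergence to the global minimizer. For the convexity step, note that the loss term $\tfrac12\|XW+1_nb-Y\|_F^2$ is a convex quadratic in $(W,b)$; the manifold term $\tfrac{\alpha}{2}\,tr(W^TX^TLXW)$ is convex because $L=P-S$ is the graph Laplacian of a nonnegative symmetric weight matrix (by construction $S=(C+C^T)/2$ with $C\ge 0$ and $P_{ii}=\sum_j S_{ij}$), hence $L\succeq 0$ and $X^TLX\succeq 0$; and $\|W\|_{2,1}$ and $\|W\|_F^2$ are convex, the latter strictly so. Thus $f$ is convex, and for $\beta>0,\ \rho<1$ it is strictly convex in $W$, so its minimizer in $W$ (hence the feature scores) is unique. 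Moreover $f\ge 0$, so it is bounded below.

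For the descent step I would use the iteratively reweighted surrogate argument suggested by \eqref{formula8}--\eqref{formula9}. Fix $U^{(t)}$ as in \eqref{formula14} and let $g(W,b\mid W^{(t)})$ denote the objective with $\|W\|_{2,1}$ replaced by $tr(W^TU^{(t)}W)$; this $g$ is a smooth convex quadratic in $(W,b)$, and the update \eqref{formula13} is precisely $\arg\min_W\min_b g(W,b\mid W^{(t)})$, with \eqref{formula12} performing the inner minimization over $b$. Hence $g(W^{(t+1)},b^{(t+1)}\mid W^{(t)})\le g(W^{(t)},b^{(t)}\mid W^{(t)})$. Applying Lemma~1 with the substitutions $a\mapsto\|w_i^{(t+1)}\|_2^2$, $b\mapsto\|w_i^{(t)}\|_2^2$ for each row $i$ and summing over $i$ gives $\|W^{(t+1)}\|_{2,1}-tr\!\big(W^{(t+1)T}U^{(t)}W^{(t+1)}\big)\le \|W^{(t)}\|_{2,1}-tr\!\big(W^{(t)T}U^{(t)}W^{(t)}\big)$ (carrying the $\varepsilon$-truncation in $U$ as a vanishing correction). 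Adding $\tfrac{\beta\rho}{2}$ times this inequality to the surrogate-descent inequality cancels the reweighted quadratic terms and leaves $f(W^{(t+1)},b^{(t+1)})\le f(W^{(t)},b^{(t)})$.

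Finally, since $f\ge 0$ the monotone sequence $\{f(W^{(t)},b^{(t)})\}$ converges; I would then argue that any accumulation point $(W^\star,b^\star)$ of the bounded iterates is a fixed point of the update maps \eqref{formula12}--\eqref{formula14}, hence satisfies the subgradient stationarity (KKT) conditions $0\in\partial f(W^\star,b^\star)$; by convexity these are sufficient for global optimality, and strict convexity in $W$ pins down $W^\star$ uniquely, so the whole sequence converges to the optimum. The main obstacle I expect is the rigorous treatment of the nonsmooth $\ell_{2,1}$ term: one must check that the quadratic surrogate genuinely majorizes $f$ up to the additive constant identified in \eqref{formula8}--\eqref{formula9}, that the $\varepsilon$-truncation does not destroy monotonicity, and that a fixed point of the reweighting recursion actually satisfies the true subgradient optimality condition rather than merely the smoothed one (in particular handling rows with $w_i^\star=0$).
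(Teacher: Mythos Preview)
Your core descent argument is the same as the paper's: fix $U^{(t)}$, observe that \eqref{formula13} minimizes the smooth surrogate, then invoke Lemma~1 row by row with $a=\|w_i^{(t+1)}\|_2^2$ and $b=\|w_i^{(t)}\|_2^2$ and sum to conclude $f(W^{(t+1)},b^{(t+1)})\le f(W^{(t)},b^{(t)})$. That is exactly what the paper does in \eqref{formula16}--\eqref{formula22}, and the paper stops there, inferring only that the (bounded, monotone) sequence of objective values converges.

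Your proposal goes beyond the paper in two respects. First, you actually argue convexity of $f$ (positive semidefiniteness of $L$, convexity of $\|\cdot\|_{2,1}$ and $\|\cdot\|_F^2$), which the paper never spells out despite the theorem's wording. Second, you sketch the passage from monotone descent to global optimality via boundedness of iterates, fixed points of the update map, and subgradient stationarity; the paper omits this entirely and simply asserts convergence after establishing descent. So your plan is a genuine strengthening: it supplies the convexity and optimality arguments that the theorem statement promises but the paper's proof does not deliver. The obstacles you flag (the $\varepsilon$-truncation, and whether a fixed point of the smoothed recursion satisfies the true $\ell_{2,1}$ optimality condition at rows with $w_i^\star=0$) are real and are likewise not addressed in the paper; your awareness of them is appropriate, though for matching the paper you need only the descent step.
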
 

\begin{proof}[\textbf{\upshape Proof:}]
    
  If we write $ W^{(t+1)}, b^{(t+1)} $ as the solution produced in the $ t $-th iteration, we can get the following relationship
  \begin{equation} \label{formula16}
      \begin{split}
          \left( W^{(t+1)}, b^{(t+1)} \right) 
          & = \arg \min \limits_{W,b} \frac{1}{2} {\left \| XW + 1_{n}b -Y \right \|}_{F}^{2} + \frac{\alpha}{2} tr \left(W^{T} X^{T} L X W \right) + \frac{\beta \left( 1 - \rho \right)}{2} tr \left( W^{T} W \right) + \frac{\beta \rho}{2} tr \left( W^{T} U^{(t)} W \right)
      \end{split}
  \end{equation}

  Here $ U^{(t)} $ is a diagonal matrix, and its $ i $-th diagonal element is 
  $ \frac{1}{2 {\left\| w_{i}^{(t)} \right \|}_{2}} $, so the following formula can be obtained.
  \begin{equation} \label{formula17}
      \begin{split}
          & \frac{1}{2} {\left\| X W^{(t+1)} - 1_{n} b^{(t+1)} - Y  \right\|}_F^{2} + \frac{\alpha}{2} tr \left( {\left( W^{(t+1)} \right)}^{T} X^{T} L X W^{(t+1)} \right) 
          + \frac{\beta \left( 1 - \rho \right) }{2} tr \left( {\left( W^{(t+1)} \right) }^{T} W \right) \\
          & + \frac{\beta \rho}{2} tr \left( {\left( W^{(t+1)} \right)}^{T} U^{t} W^{(t+1)}  \right) \leq \frac{1}{2} {\left\| X W^{(t)} - 1_{n}b^{(t)} - Y  \right\| }_F^{2} + \frac{\alpha}{2} tr \left( {\left( W^{(t)} \right) }^{T} X^{T} L X W^{(t)} \right) \\
          & + \frac{\beta \left( 1 - \rho \right) }{2} tr \left( {\left( W^{(t)} \right) }^{T}W \right) + \frac{\beta \rho}{2} tr \left( {\left( W^{(t)} \right) }^{T} U^{t} W^{(t)}  \right) 
      \end{split}
  \end{equation}

  Further, there is
  \begin{equation} \label{formula18}
      \begin{split}
        & \frac{1}{2} {\left\| X W^{(t+1)} - 1_{n} b^{(t+1)} - Y  \right\| }_F^{2} + \frac{\alpha}{2} tr \left( {\left( W^{(t+1)} \right) }^{T} X^{T} L X W^{(t+1)} \right) + \frac{\beta \left( 1 - \rho \right) }{2} tr \left( {\left( W^{(t+1)} \right) }^{T} W \right) + \frac{\beta \rho}{2} \sum \limits_{i} \frac{{\left\| w_{i}^{(t+1)} \right\| }_{2}^{2}}{2{\left\| w_{i}^{(t)} \right\| }_{2}}\\
        & \leq \frac{1}{2} {\left\| X W^{(t)} - 1_{n} b^{(t)} - Y  \right\| }_F^{2} + \frac{\alpha}{2} tr \left( {\left( W^{(t)} \right) }^{T} X^{T} L X W^{(t)} \right) + \frac{\beta \left( 1 - \rho \right) }{2} tr \left( {\left( W^{(t)} \right)}^{T} W \right) + \frac{\beta \rho}{2} \sum \limits_{i} \frac{{\left\| w_{i}^{(t)} \right\| }_{2}^{2}}{2{\left\| w_{i}^{(t)} \right\| }_{2}}
      \end{split}
  \end{equation}

  Therefore, it can be inferred that the following formula holds
  \begin{equation} \label{formula19}
      \begin{split}
        & \frac{1}{2} {\left\| X W^{(t+1)} - 1_{n} b^{(t+1)} - Y  \right\| }_F^{2} + \frac{\alpha}{2} tr \left( {\left( W^{(t+1)} \right) }^{T} X^{T} L X W^{(t+1)} \right) + \frac{\beta \left( 1 - \rho \right)}{2} tr\left( {\left( W^{(t+1)} \right)}^{T} W \right) + \frac{\beta \rho}{2}{\left\| W^{(t+1)} \right\| }_{2,1} \\
        & - \frac{\beta \rho}{2} \left( {\left\| W^{(t+1)} \right\| }_{2,1} - \sum \limits_{i} \frac{{\left\| w_{i}^{(t+1)} \right\| }_{2}^{2}}{2{\left\| w_{i}^{(t)} \right\| }_{2}} \right) \leq \frac{1}{2} {\left\| X W^{(t)} - 1_{n} b^{(t)} - Y  \right\| }_F^{2} + \frac{\alpha}{2} tr \left( {\left( W^{(t)} \right)}^{T} X^{T} L X W^{(t)} \right) \\
        & + \frac{\beta \left( 1 - \rho \right) }{2} tr \left( {\left( W^{(t)} \right) }^{T} W \right) + \frac{\beta \rho}{2}{\left\| W^{(t)} \right\| }_{2,1} - \frac{\beta \rho}{2} \left( {\left\| W^{(t)} \right\| }_{2,1} - \sum \limits_{i} \frac{{\left\| w_{i}^{(t)} \right\| }_{2}^{2}}{2{\left\| w_{i}^{(t)} \right\| }_{2}} \right)
      \end{split}
  \end{equation}

  From $ { \| W \| }_{2,1} = \sum \limits_{i} { \| w_{i} \| }_{2}$ and the \textbf{Lemma 1}
  \begin{equation} \label{formula20}
      { \| w_{i}^{(t+1)} \| }_{2} - \frac{{\| w_{i}^{(t+1)} \| }_2^{2}}{2{ \| w_i^{(t)} \|}_{2}} 
      \leq 
      { \| w_{i}^{(t)} \|}_{2} - \frac{{ \| w_{i}^{(t)} \|}_2^{2}}{2{ \| w_i^{(t)} \|}_{2}}
  \end{equation}

  Thus
  \begin{equation} \label{formula21}
      \sum \limits_{i} {\left( { \| w_{i}^{(t+1)} \| }_{2} - \frac{{ \| w_{i}^{(t+1)} \| }_2^{2}}{2{ \| w_i^{(t)} \| }_{2}} \right)} 
      \leq 
      \sum \limits_{i} {\left({\| w_{i}^{(t)} \| }_{2} - \frac{{\| w_{i}^{(t)} t \| }_2^{2}}{2{\| w_i^{(t)} \| }_{2}} \right) }
  \end{equation}

  That is
  \begin{equation} \label{formula22}
      {\| W^{(t+1)} \| }_{2,1} - \sum \limits_{i}\frac{{\| w_{i}^{(t+1)} \|}_2^{2}}{2{\| w_i^{(t)} \| }_{2}}
      \leq
      {\| W^{(t)} \| }_{2,1} - \sum \limits_{i}\frac{{\| w_{i}^{(t)} \| }_2^{2}}{2{\| w_i^{(t)} \| }_{2}}
  \end{equation}

  Therefore we can immediately infer that the algorithm converges.
\end{proof}

\begin{figure}[htbp]\label{fig8} 
    \centering
    \includegraphics[scale=0.25]{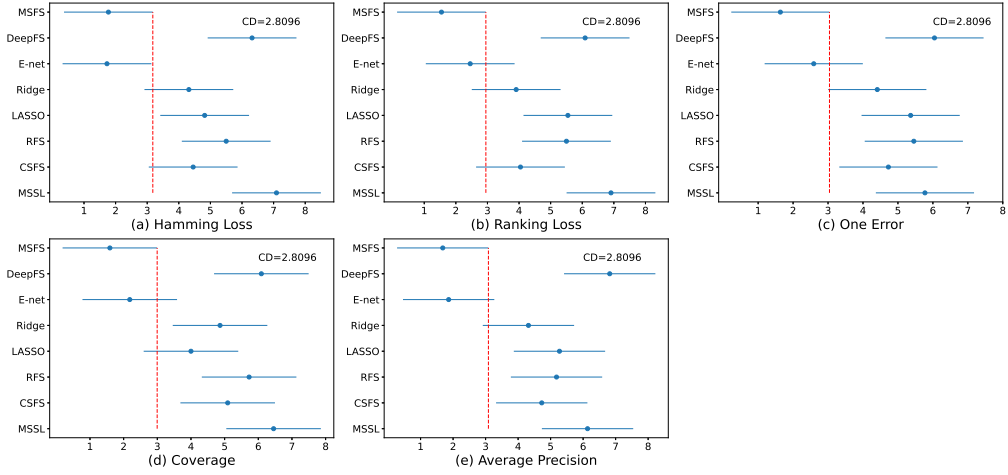}
    \caption{Comparison of MSFS against other comparing methods with the Bonferroni-Dunn test($\alpha=0.05$).}
\end{figure}

\section{Experiments and results\label{sec5}}

In this section, a series of compared experiments are designed 
in six publicly available data sets.
Experimental results show that the proposed method achieves 
state-of-art on multiple data sets.
It should be noted that in the experiments involved in this paper, 
ML-KNN is used as the basic classifier.

\subsection{Datasets}

We experimented with 11 datasets for descriptions in Table 1, 
all of which are freely available from 
the open source multi-label learning data site but the Number.
The data set of Number is made manually, and it contains 2000 pieces of data. 
Each piece of data consists of five digital pictures containing 81 pixels, 
and some of the five pictures may not contain numbers. 
In order to make the classifier trained from the data set more robust, 
we add 15\% Gaussian noise to each data set.

\begin{table}[htbp]
    \centering
    \caption{Data sets and the related descriptions} \label{table1}
    \resizebox{\textwidth}{!}{
        \begin{tabular}{lllllllll}
            \hline
            Datasets  & Dim(D)  & L(D)  & $\|$D$\|$  & $\|$train$\|$  & $\|$test$\|$  & PMC      & ANL      & Dens         \\ 
            \hline
            Arts      & 462     & 26    & 5000       & 2000           & 3000          & 0.4398   & 1.6360   & 0.0629       \\
            Emotions  & 72      & 6     & 593        & 391            & 202           & 0.6998   & 1.8685   & 0.3114       \\
            Enron     & 1001    & 53    & 1702       & 1123           & 579           & 0.8848   & 3.3784   & 0.0637       \\
            Image     & 294     & 5     & 2000       & 1000           & 1000          & 0.2285   & 1.2360   & 0.2472       \\
            Mediamil  & 120     & 101   & 15730      & 5000           & 10730         & 0.9350   & 4.3756   & 0.0433       \\
            Medical   & 1449    & 45    & 978        & 333            & 645           & 0.2311   & 1.2454   & 0.0277       \\
            Number    & 405     & 10    & 2000       & 1000           & 1000          & 0.9825   & 3.3265   & 0.3326       \\
            Recreation& 606     & 22    & 5000       & 2000           & 3000          & 0.3080   & 1.4232   & 0.0647       \\
            Scene     & 294     & 6     & 2407       & 1211           & 1196          & 0.0735   & 1.0740   & 0.1790       \\
            Science   & 743     & 40    & 5000       & 2000           & 3000          & 0.3228   & 1.4506   & 0.0363       \\
            Yeast     & 103     & 14    & 2417       & 1500           & 917           & 0.9868   & 4.2371   & 0.3026       \\ 
            \hline
            \multicolumn{9}{l}{(\url{http://mulan.sourceforge.net/datasets-mlc.html})}
        \end{tabular}
    }
\end{table}

Here, Dim(D) is the dimension of the corresponding data set, that is, 
the number of features. 
L(D) is the number of labels, 
$\|$D$\|$ is the number of instances of the data set. 
PMC, ANL and Dens are defined as follows:

\noindent
(1) $ PMC = \frac {1}{n} \sum \limits_{i=1}^{n} \{ y_{i} y_{i}^{T} \geq 2 \} $, which measures the percentage of documents belonging to more than one category.

\noindent
(2) $ ANL = \frac {1}{n} \sum \limits_{i=1}^{n} \vert y_{i} \vert$, which denotes the average number of labels that each object belongs to.

\noindent
(3) $ Dens = \frac{1}{n m} \sum \limits_{i=1}^{n} \vert y_{i} \vert$, which measures the density of the label distribution of the dataset.

\subsection{Evaluation metrics}

In multi-label learning, 
we pay more attention to the ranking of prediction results, 
so we use hamming loss, ranking loss, one error, coverage and 
average precision as the evaluation metrics of multi-label feature selection.

\noindent
(1) Hamming Loss measures the ratio of incorrectly predicted labels.
\begin{equation} \label{formula23}
    HL = \frac {1}{n} \sum \limits_{i=1}^{n} \frac {\hat{y}_{i} \oplus y_{i}}{m}
\end{equation}

\noindent
(2) Ranking Loss evaluates the average fraction of label pairs that irrelevant labels are ranked higher than the relevant label.
\begin{equation} \label{formula24}
    RL = \frac {1}{n} \sum \limits_{i=1}^{n} \frac{| \left\{ \left( l_{k},l_{j} \right) | f_{k} \left( x_{i} \right) < f_{j} \left( x_{i} \right), \left( l_{k}, l_{j} \in y_{i} \times \bar{y_{i}} \right) \right\} |}{\left| y_{i} \right| \left| \bar{y_{i}} \right|}
\end{equation}

\noindent
(3) One Error evaluates how many times the top-ranked label is not in the relevant label set of the instance.
\begin{equation} \label{formula25}
    OE = \frac{1}{n} \sum \limits_{i=1}^{n} \left( arg \max \limits_{l_{k} \in y_{i}} rank \left( x_{i}, l_{k} \right) \notin y_{k} \right)
\end{equation}

\noindent
(4) Coverage is used to measure the steps, on average, required to cover the true label.
\begin{equation} \label{formula26}
    Cov = \frac {1}{m} \left( \frac {1}{n} \sum \limits_{i=1}^{n} arg{\max \limits_{l_{k} \in y_{i}}} rank \left( x_{i}, l_{k} \right) - 1 \right)
\end{equation}

\noindent
(5) Average Precision evaluates the average fraction of labels ranked above a particular label in the truth label sets.
\begin{equation} \label{formula27}
    AP = \frac{1}{n} \sum \limits_{i=1}^{n} \frac {1}{y_{i}} \sum \limits_{l_{k} \in y_{i}} \frac{\left\{ l_{j} | rank \left( x_{i},l_{j} \right) \leq rank \left( x_{i}, l_{k} \right), l_{j} \in y_{i} \right\} }{rank \left( x_{i}, l_{k} \right) }
\end{equation}

\noindent
(6) The difference between two methods is distinguished with the critical difference (CD), as follow

\begin{equation} \label{formula28}
    CD = q_{\alpha} \sqrt{\frac{k(k+1)}{6N}}
\end{equation}
where $q_{\alpha}=2.690$ at significance level $\alpha= 0.05$, and then we can calculate CD = 2.8096 (k = 8, N = 11).

When using hamming loss, ranking loss, one error, coverage to evaluate, the smaller the value, the better the model performance, whereas on AP the opposite is true.

\subsection{Experiment setting}
    In order to verify the effectiveness of the proposed method, we make a comprehensive comparison with the following methods.

\noindent
\textbf{Base}: All original features are selected for learning tasks, the base classifier is ML-KNN \cite{Zhang2007-ML-KNN}.

\noindent
\textbf{RFS} \cite{Nie2010-RFS}: By imposing $\ell_{2,1}$ penalty on the loss function, 
the coefficient matrix with row sparse property is obtained, 
thus completing feature selection in multi-label learning.

\noindent
\textbf{MSSL} \cite{Cai2018-MSSL}: The method combines multi-label learning and manifold learning. 
It treats each feature as a node and constructs the graph Laplacian matrix. 

\noindent
\textbf{CSFS} \cite{chang2014-CSFS}: CSFS extends the algorithm to semi-supervised learning by weighting samples, 
which makes the model can be applied to large-scale data sets

\noindent
\textbf{LASSO} \cite{Tibshirani1996-LASSO}: It use the $ \ell_{1} $ regularization to make elements of the coefficient matrix become 0 as much as possible. 

\noindent
\textbf{Ridge} \cite{Hoerl1970-Ridge}: By imposing $\ell_{2}$ penalty on the coefficients, Ridge method makes the weights of variables with linear correlation decrease at the same time, 
thus weakening the adverse consequences caused by multiple linearity.

\noindent
\textbf{E-net} \cite{Zou2005-E-net}: It makes the model have the ability to select a single strong feature and a group of strong features by balancing $\ell_{1}$ and $\ell_{2}$ regularization.

\noindent
\textbf{DeepFS} : Under the background of deep learning, it can learn a large number of parameters by building a multi-layer network to evaluate the importance of features to labels. (We completed the experiment by referring to python's dl-selection library) 

For the sake of fairness, 
all of the parameters of the above mentioned methods for 
comparative experiments are generally 
set as $\{10^{-5}, 10^{-4}, 10^{-3}, 10^{-2},\\ 
10^{-1}, 1, 10^{1}, 10^{2}, 10^{3}\}$, 
and the iteration times of the algorithm are consistently set as 50. 
Specifically, 
for MLKNN, we set k=7, for E-net and our proposed MSFS, 
we set the value interval of $\rho$ 
as $\{0.0, 0.1, 0.2, 0.3, 0.4, 0.5, 0.6, 0.7, 0.8, 0.9,\\
1.0\}$. In addition, the length of random walk is set as 80 for MSFS. 
In the DeepFS, 
the number of layers and dimensions of the hidden layers of 
the neural network are selected from [2, 4, 8, 16] 
and [512, 256, 128, 64, 32].

\subsection{Comparison and analysis of methods}

    We conducted comparative experiments on 11 data sets, 
and we selected \{5, 10, 15,...,90, 95, 100\} for the number of features. 
For each method, we obtained its best performance through grid search 
and recorded it in the following tables. 
Table 2, 3, 4, 5, and 6 are the recorded results of hamming loss, 
ranking loss, one error, coverage, and average precision, respectively.
In these table, we use bold font to indicate the best performance.

\begin{table}[htbp]
    \centering
    \caption{Hamming loss(mean$\pm$std) of 8 different algorithms on 11 different datasets} \label{table2}
    \resizebox{\textwidth}{!}{
        \begin{tabular}{llllllllll}
            \hline
            Datasets      & Base      & MSSL    & CSFS    & RFS     & LASSO   & Ridge   & E-net   & DeepFS   & MSFS    \\ 
            \hline
            Arts		      & 0.0580	    & 0.0582$\pm$0.0014       & 0.0564$\pm$0.0016     & 0.0572$\pm$0.0011     & 0.0561$\pm$0.0020     & 0.0564$\pm$0.0013     & 0.0557$\pm$0.0015     & 0.0581$\pm$0.0017     & \textbf{0.0557$\pm$0.0014}  \\
            Emotions		  & 0.2979		& 0.2252$\pm$0.0096       & 0.2178$\pm$0.0060     & 0.2219$\pm$0.0061     & 0.2145$\pm$0.0073     & 0.2145$\pm$0.0072     & 0.2046$\pm$0.0062     & 0.2244$\pm$0.0053     & \textbf{0.2046$\pm$0.0048}  \\
            Enron		      & 0.0511		& 0.0505$\pm$0.0013       & 0.0488$\pm$0.0010     & 0.0511$\pm$0.0012     & 0.0487$\pm$0.0013     & 0.0488$\pm$0.0015     & 0.0483$\pm$0.0012     & 0.0491$\pm$0.0012     & \textbf{0.0478$\pm$0.0009}  \\
            Image		      & 0.1832		& 0.1802$\pm$0.0072       & 0.1728$\pm$0.0050     & 0.1776$\pm$0.0038     & 0.1802$\pm$0.0057     & 0.1728$\pm$0.0057     & 0.1728$\pm$0.0049     & 0.1790$\pm$0.0051     & \textbf{0.1712$\pm$0.0031}  \\
            Mediamill		  & 0.0322		& 0.0318$\pm$0.0011       & 0.0317$\pm$0.0010     & 0.0317$\pm$0.0011     & 0.0318$\pm$0.0010     & 0.0317$\pm$0.0010      & \textbf{0.0316$\pm$0.0008}     & 0.0318$\pm$0.0010     & 0.0317$\pm$0.0004  \\
            Medical		      & 0.0200		& 0.0168$\pm$0.0006       & 0.0125$\pm$0.0003     & 0.0128$\pm$0.0003     & 0.0129$\pm$0.0004     & 0.0125$\pm$0.0005     & 0.0119$\pm$0.0003     & 0.0168$\pm$0.0005     & \textbf{0.0111$\pm$0.0003}  \\
            Number		      & 0.1921		& 0.2026$\pm$0.0078       & 0.1889$\pm$0.0054     & \textbf{0.1679$\pm$0.0052}     & 0.1743$\pm$0.0032     & 0.1889$\pm$0.0054     & 0.1693$\pm$0.0037     & 0.2176$\pm$0.0041     & 0.1722$\pm$0.0033  \\
            Recreation		  & 0.0580		& 0.0601$\pm$0.0020       & 0.0561$\pm$0.0014     & 0.0564$\pm$0.0015     & 0.0562$\pm$0.0016     & 0.0561$\pm$0.0016     & 0.056$\pm$0.0015      & 0.0573$\pm$0.0015     & \textbf{0.0558$\pm$0.0012}  \\
            Scene		      & \textbf{0.0924}		& 0.1038$\pm$0.0038       & 0.1016$\pm$0.0023     & 0.1073$\pm$0.0027     & 0.0973$\pm$0.0030      & 0.1016$\pm$0.0031     & 0.0962$\pm$0.0025     & 0.0971$\pm$0.0025     & 0.0964$\pm$0.0016  \\
            Science		      & 0.0337		& 0.0345$\pm$0.0007       & 0.0331$\pm$0.0008     & 0.0338$\pm$0.0008     & 0.0334$\pm$0.0009     & 0.0331$\pm$0.0013     & \textbf{0.0328$\pm$0.0009}     & 0.0337$\pm$0.0011     & 0.0331$\pm$0.0008  \\
            Yeast		      & 0.1960		& 0.194$\pm$0.0062        & 0.1949$\pm$0.0050     & 0.1942$\pm$0.0053     & 0.1942$\pm$0.0059     & 0.1949$\pm$0.0072     & 0.1934$\pm$0.0050     & 0.1942$\pm$0.0063     & \textbf{0.1931$\pm$0.0047}  \\
            \hline
            \multicolumn{10}{l}{The smaller the value, the better the performs}
        \end{tabular}
    }
\end{table}

\begin{table}[htbp]
    \centering
    \caption{Ranking loss(mean$\pm$std) of 8 different algorithms on 11 different datasets} \label{table3}
    \resizebox{\textwidth}{!}{
        \begin{tabular}{llllllllll}
            \hline
            Datasets    & Base    & MSSL    & CSFS    & RFS     & LASSO   & Ridge   & E-net    & DeepFS    & MSFS        \\ 
            \hline
            Arts		    & 0.0335		& 0.0341$\pm$0.0009       & 0.0327$\pm$0.0009     & 0.0335$\pm$0.0009     & 0.0321$\pm$0.0010     & 0.0327$\pm$0.0010     & \textbf{0.0314$\pm$0.0006}     & 0.0346$\pm$0.0010     & 0.0317$\pm$0.0010	\\
            Emotions		& 0.1824		& 0.1555$\pm$0.0052       & 0.1553$\pm$0.0045     & 0.1516$\pm$0.0025     & 0.1524$\pm$0.0057     & 0.1524$\pm$0.0037     & 0.1424$\pm$0.0039     & 0.1497$\pm$0.0042     & \textbf{0.1424$\pm$0.0037}	\\
            Enron		    & 0.0150		& 0.0144$\pm$0.0004       & 0.0143$\pm$0.0004     & 0.0150$\pm$0.0003     & 0.0144$\pm$0.0004     & 0.0143$\pm$0.0004     & 0.0143$\pm$0.0003     & 0.0143$\pm$0.0004     & \textbf{0.0142$\pm$0.0004}	\\
            Image		    & 0.1579		& 0.1516$\pm$0.0049       & 0.1456$\pm$0.0038     & 0.1472$\pm$0.0036     & 0.1539$\pm$0.0054     & 0.1456$\pm$0.0046     & 0.1446$\pm$0.0042     & 0.1481$\pm$0.0045     & \textbf{0.1408$\pm$0.0028}	\\
            Mediamill		& 0.0071		& 0.0070$\pm$0.0000       & 0.0070$\pm$0.0000     & 0.0070$\pm$0.0000     & 0.0071$\pm$0.0000     & 0.0070$\pm$0.0000     & 0.0070$\pm$0.0000     & 0.0070$\pm$0.0000     & \textbf{0.0070$\pm$0.0000}	\\
            Medical		    & 0.0138		& 0.0116$\pm$0.0004       & 0.0076$\pm$0.0002     & 0.0080$\pm$0.0002     & 0.0082$\pm$0.0003     & 0.0076$\pm$0.0002     & 0.0071$\pm$0.0002     & 0.0116$\pm$0.0002     & \textbf{0.0067$\pm$0.0000}	\\
            Number		    & 0.1050		& 0.1026$\pm$0.0037       & 0.1002$\pm$0.0026     & 0.0953$\pm$0.0025     & 0.0975$\pm$0.0028     & 0.1002$\pm$0.0035     & \textbf{0.0948$\pm$0.0026}     & 0.1067$\pm$0.0031     & 0.0953$\pm$0.0025	\\
            Recreation	    & 0.0391		& 0.0419$\pm$0.0014       & 0.0375$\pm$0.0009     & 0.0381$\pm$0.0010     & 0.0373$\pm$0.0005     & 0.0375$\pm$0.0013     & 0.0368$\pm$0.0010     & 0.0396$\pm$0.0010     & \textbf{0.0367$\pm$0.0009}	\\
            Scene		    & 0.0773		& 0.0836$\pm$0.0032       & 0.0763$\pm$0.0015     & 0.0824$\pm$0.0019     & 0.0788$\pm$0.0018     & 0.0763$\pm$0.0020     & 0.0775$\pm$0.0018     & 0.0785$\pm$0.0017     & \textbf{0.0756$\pm$0.0014}	\\
            Science		    & 0.0215		& 0.0233$\pm$0.0006       & 0.0212$\pm$0.0003     & 0.0218$\pm$0.0006     & 0.0214$\pm$0.0004     & 0.0212$\pm$0.0005     & 0.0211$\pm$0.0005     & 0.0227$\pm$0.0005     & \textbf{0.0207$\pm$0.0005}	\\
            Yeast		    & 0.0645		& 0.0634$\pm$0.0022       & 0.0631$\pm$0.0020     & 0.0638$\pm$0.0016     & 0.0637$\pm$0.0017     & 0.0631$\pm$0.0018     & 0.0632$\pm$0.0015     & 0.0639$\pm$0.0017     & \textbf{0.0630$\pm$0.0015}	\\
            \hline
            \multicolumn{10}{l}{The smaller the value, the better the performs}
          \end{tabular}
    }
\end{table}
From these tables, it can be seen that MSFS has the best performance 
for 35 times, ranking first, 
E-net (using $\ell_{1}$ and $\ell_{2}$ regularization at the same time) 
12 times, ranking second, 
MSSL (using manifold regularization) 
and RFS (only using $\ell_{2,1}$ regularization) 2 times, ranking third. 
We can get the following several conclusions. 
First of all, even if E-net only uses joint sparse regularization, 
it is far superior to other methods, 
which shows that joint sparse regularization method has better 
feature selection ability than other methods. 
Secondly, when only $\ell_{2,1}$ regularization is used, 
RFS method does not perform well, 
that is, $\ell_{2,1}$ regularization alone does not have 
strong feature selection ability. 
Then, after introducing manifold regularization 
on the basis of $\ell_{2,1}$ regularization, 
MSSL method is superior to RFS, which shows that 
manifold regularization can improve sparse regularization.
Next, the proposed method is superior to E-net method 
because it uses manifold regularization to obtain 
more accurate spatial structure. 
Compared with the MSSL method, 
we use the joint sparse regularization method 
to obtain more discriminant features.
Finally, although various neural network models have excellent performance 
in feature extraction and prediction, 
their ability is not outstanding in the field of feature selection, 
which is due to it lack of interpretation of original features to some extent.

\begin{table}[htbp]
  \centering
  \caption{One Error(mean$\pm$std) of 8 different algorithms on 11 different datasets} \label{table4}
  \resizebox{\textwidth}{!}{
      \begin{tabular}{llllllllll}
          \hline
          Datasets  & Base    & MSSL    & CSFS    & RFS     & LASSO   & Ridge   & E-net   & DeepFS   & MSFS        \\ 
          \hline
          Arts		    & 0.6160		& 0.6170$\pm$0.0187       & 0.5888$\pm$0.0185     & 0.6092$\pm$0.0170     & 0.5801$\pm$0.0190     & 0.5888$\pm$0.0247     & \textbf{0.5731$\pm$0.0144}     & 0.6347$\pm$0.0223     & 0.5767$\pm$0.0134		\\
          Emotions		& 0.4257		& \textbf{0.3218$\pm$0.0095}       & 0.3416$\pm$0.007      & 0.3465$\pm$0.0074     & 0.3465$\pm$0.0105     & 0.3416$\pm$0.0096     & 0.3366$\pm$0.0072     & 0.3366$\pm$0.0111     & 0.3366$\pm$0.0075		\\
          Enron		    & 0.3276        & 0.2790$\pm$0.0094       & 0.2530$\pm$0.0067     & 0.3276$\pm$0.0058     & 0.2669$\pm$0.0056     & 0.2530$\pm$0.0093     & 0.2530$\pm$0.0054     & 0.2686$\pm$0.0061     & \textbf{0.2444$\pm$0.0071}		\\
          Image		    & 0.4300		& 0.3940$\pm$0.0144       & 0.3860$\pm$0.0079     & 0.3840$\pm$0.0094     & 0.4080$\pm$0.0122     & 0.3860$\pm$0.0146     & 0.3860$\pm$0.0101     & 0.3940$\pm$0.0120     & \textbf{0.3750$\pm$0.0069}		\\
          Mediamill		& 0.3003		& 0.2708$\pm$0.0094       & 0.2688$\pm$0.0076     & 0.2668$\pm$0.0057     & 0.2737$\pm$0.0096     & 0.2688$\pm$0.0097     & 0.2657$\pm$0.0069     & 0.2713$\pm$0.0093     & \textbf{0.2639$\pm$0.0074}		\\
          Medical		    & 0.4667		& 0.4124$\pm$0.0177       & 0.2372$\pm$0.0051     & 0.2651$\pm$0.0085     & 0.2744$\pm$0.0092     & 0.2372$\pm$0.0064     & 0.2202$\pm$0.0051     & 0.3984$\pm$0.0089     & \textbf{0.2109$\pm$0.0059}		\\
          Number		    & 0.2410		& 0.1910$\pm$0.0050       & 0.2280$\pm$0.0069     & 0.2040$\pm$0.0061     & \textbf{0.1270$\pm$0.0025}     & 0.2280$\pm$0.0066     & 0.1270$\pm$0.0026     & 0.2410$\pm$0.0071     & 0.1990$\pm$0.0042		\\
          Recreation	    & 0.6316		& 0.6934$\pm$0.0244       & 0.5897$\pm$0.0121     & 0.6014$\pm$0.0123     & 0.5930$\pm$0.0180     & 0.5897$\pm$0.0158     & 0.5769$\pm$0.0132     & 0.6300$\pm$0.0165     & \textbf{0.5766$\pm$0.0157}		\\
          Scene		    & 0.3094		& 0.3294$\pm$0.0122       & 0.3060$\pm$0.0102     & 0.3135$\pm$0.0075     & 0.2993$\pm$0.0117     & 0.3060$\pm$0.0115     & 0.2826$\pm$0.0051     & 0.3052$\pm$0.0107     & \textbf{0.2793$\pm$0.0075}		\\
          Science		    & 0.7748		& 0.8436$\pm$0.0192       & 0.7562$\pm$0.0173     & 0.7802$\pm$0.0182     & 0.7685$\pm$0.0241     & 0.7562$\pm$0.0270     & 0.7525$\pm$0.0169     & 0.8175$\pm$0.0252     & \textbf{0.7472$\pm$0.0119}		\\
          Yeast		    & 0.2868		& \textbf{0.2639$\pm$0.0069}       & 0.2824$\pm$0.0072     & 0.2726$\pm$0.0056     & 0.2792$\pm$0.0091     & 0.2726$\pm$0.0098     & 0.2770$\pm$0.0073     & 0.2715$\pm$0.0081     & 0.2650$\pm$0.0042		\\
          \hline
          \multicolumn{10}{l}{The smaller the value, the better the performs}
      \end{tabular}
  }
\end{table}

\begin{table}[htbp]
  \centering
  \caption{Coverage(mean$\pm$std) of 8 different algorithms on 11 different datasets} \label{table5}
  \resizebox{\textwidth}{!}{
      \begin{tabular}{llllllllll}
          \hline
          Datasets  & Base    & MSSL    & CSFS    & RFS     & LASSO   & Ridge   & E-net   & DeepFS   & MSFS        \\ 
          \hline
          Arts		    & 15.6163		& 15.7259$\pm$0.5128      & 14.9189$\pm$0.3834    & 15.3060$\pm$0.4891    & 14.5818$\pm$0.4101    & 14.9189$\pm$0.5891    & \textbf{14.2577$\pm$0.2842}    & 15.7081$\pm$0.5573     & 14.5375$\pm$0.3391		\\
          Emotions		& 3.0050		& 2.5594$\pm$0.0664       & 2.5396$\pm$0.0752     & 2.5347$\pm$0.0682     & 2.5248$\pm$0.0820     & 2.5248$\pm$0.0911     & 2.4554$\pm$0.0528     & 2.5842$\pm$0.0967     & \textbf{2.4505$\pm$0.0572}		\\
          Enron		    & 24.6118		& 23.5511$\pm$0.6208      & 23.0485$\pm$0.5181    & 24.6118$\pm$0.5306    & 22.9792$\pm$0.6218    & 23.0485$\pm$0.7633    & \textbf{22.7106$\pm$0.5541}    & 23.1057$\pm$0.7528     & 22.7106$\pm$0.6515		\\
          Image		    & 1.4060		& 1.3910$\pm$0.0467       & 1.3470$\pm$0.0359     & 1.3570$\pm$0.0276     & 1.3940$\pm$0.0454     & 1.3470$\pm$0.0458     & 1.3400$\pm$0.0326      & 1.3690$\pm$0.0501     & \textbf{1.2960$\pm$0.0355}		\\
          Mediamill		& 70.2928		& 63.6074$\pm$2.0855      & 64.1155$\pm$1.7667    & 64.2577$\pm$1.5544    & 62.9520$\pm$1.2791    & 64.1155$\pm$1.902     & 62.9520$\pm$1.6103     & 63.6138$\pm$1.8578    & \textbf{62.6714$\pm$1.5689}		\\
          Medical		    & 16.9302		& 14.4264$\pm$0.4000      & 9.0791$\pm$0.1957     & 9.7659$\pm$0.2945     & 9.6899$\pm$0.2941     & 9.0791$\pm$0.2437     & \textbf{7.9609$\pm$0.2066}     & 13.8682$\pm$0.3126     & 8.0465$\pm$0.1872		\\
          Number		    & 5.6820		& 5.7080$\pm$0.2242       & 5.9130$\pm$0.1587     & 5.5910$\pm$0.1196     & 5.3460$\pm$0.1401     & 5.9130$\pm$0.2253     & \textbf{5.3460$\pm$0.1327}     & 6.0330$\pm$0.2094      & 5.5920$\pm$0.1305		\\
          Recreation	    & 10.5960		& 11.1007$\pm$0.4260      & 10.5685$\pm$0.2963    & 10.7797$\pm$0.2810    & 10.5010$\pm$0.3758    & 10.5685$\pm$0.3901    & \textbf{10.3472$\pm$0.2946}    & 10.8449$\pm$0.3536     & 10.4503$\pm$0.2893		\\
          Scene		    & 0.9264		& 0.9331$\pm$0.0353       & 0.9373$\pm$0.0273     & 0.9816$\pm$0.0252     & 0.8946$\pm$0.0278     & 0.9373$\pm$0.0316     & 0.8654$\pm$0.0156     & 0.8896$\pm$0.0337     & \textbf{0.8620$\pm$0.0148}		\\
          Science		    & 21.8863		& 23.0467$\pm$0.7623      & 21.5680$\pm$0.5938    & 22.3369$\pm$0.4698    & 21.6064$\pm$0.7420    & 21.5680$\pm$0.6806    & 21.6064$\pm$0.5425    & 22.6348$\pm$0.6258    & \textbf{21.4239$\pm$0.4758}		\\
          Yeast		    & 9.1897		& 8.7219$\pm$0.3227       & 8.7917$\pm$0.2827     & 8.6565$\pm$0.1905     & 8.7852$\pm$0.2423     & 8.7917$\pm$0.2744     & 8.7634$\pm$0.1849     & 8.6917$\pm$0.2659     & \textbf{8.6401$\pm$0.1575}		\\
          \hline
          \multicolumn{10}{l}{The smaller the value, the better the performs}
      \end{tabular}
  }
\end{table}

\begin{table}[htbp]
  \centering
  \caption{Average Precision(mean$\pm$std) of 8 different algorithms on 11 different datasets} \label{table6}
  \resizebox{\textwidth}{!}{
      \begin{tabular}{llllllllll}
          \hline
          Datasets  & Base    & MSSL    & CSFS    & RFS     & LASSO   & Ridge   & E-net   & DeepFS   & MSFS        \\ 
          \hline
          Arts		    & 0.2418		& 0.2303$\pm$0.0084       & 0.2634$\pm$0.0085     & 0.2418$\pm$0.0055     & 0.2782$\pm$0.0100     & 0.2634$\pm$0.0114     & \textbf{0.2942$\pm$0.0068}     & 0.2177$\pm$0.0079     & 0.2854$\pm$0.0068		\\
          Emotions		& 0.5381		& 0.6159$\pm$0.0121       & 0.6155$\pm$0.0153     & 0.6276$\pm$0.0153     & 0.6344$\pm$0.0171     & 0.6344$\pm$0.0226     & 0.6572$\pm$0.0187     & 0.6159$\pm$0.0181     & \textbf{0.6572$\pm$0.0150}		\\
          Enron		    & 0.3850		& 0.4054$\pm$0.0145       & 0.4381$\pm$0.0117     & 0.385$\pm$0.0103      & 0.4303$\pm$0.0096     & 0.4381$\pm$0.0163     & \textbf{0.4424$\pm$0.0073}     & 0.4302$\pm$0.0124     & 0.4396$\pm$0.0081		\\
          Image		    & 0.5897		& 0.6093$\pm$0.0159       & 0.6219$\pm$0.0224     & 0.6152$\pm$0.0135     & 0.5945$\pm$0.0161     & 0.6219$\pm$0.0157     & 0.6272$\pm$0.0158     & 0.6090$\pm$0.0160     & \textbf{0.6362$\pm$0.0130}		\\
          Mediamill		& 0.3930		& 0.4143$\pm$0.0133       & 0.4116$\pm$0.0138     & 0.4124$\pm$0.0115     & 0.4107$\pm$0.0124     & 0.4116$\pm$0.0139     & \textbf{0.4155$\pm$0.0088}     & 0.4129$\pm$0.0095     & 0.4140$\pm$0.0084		\\
          Medical		    & 0.4328		& 0.5183$\pm$0.0187       & 0.6961$\pm$0.0178     & 0.6775$\pm$0.0208     & 0.6711$\pm$0.0216     & 0.6961$\pm$0.0223     & 0.7163$\pm$0.0193     & 0.5155$\pm$0.0237     & \textbf{0.7321$\pm$0.0211}		\\
          Number		    & 0.6267		& 0.6079$\pm$0.0243       & 0.6314$\pm$0.0211     & \textbf{0.6678$\pm$0.0165}     & 0.6595$\pm$0.0197     & 0.6314$\pm$0.0167     & 0.6642$\pm$0.0194     & 0.5825$\pm$0.0223     & 0.6619$\pm$0.0169		\\
          Recreation	    & 0.2555		& 0.1997$\pm$0.0086       & 0.2879$\pm$0.0105     & 0.2769$\pm$0.0071     & 0.2912$\pm$0.0091     & 0.2879$\pm$0.0082     & 0.3022$\pm$0.0068     & 0.2472$\pm$0.0102     & \textbf{0.3034$\pm$0.0070}		\\
          Scene		    & 0.7027		& 0.6800$\pm$0.0215       & 0.7089$\pm$0.0170     & 0.6897$\pm$0.0161     & 0.7023$\pm$0.0251     & 0.7089$\pm$0.0240     & 0.7148$\pm$0.0170     & 0.7013$\pm$0.0262     & \textbf{0.7189$\pm$0.0136}		\\
          Science		    & 0.2018		& 0.1352$\pm$0.0043       & 0.2150$\pm$0.0070     & 0.1931$\pm$0.0043     & 0.2074$\pm$0.0073     & 0.2150$\pm$0.0082     & 0.2181$\pm$0.0039     & 0.1573$\pm$0.0069     & \textbf{0.2327$\pm$0.0042}		\\
          Yeast		    & 0.6147		& \textbf{0.6252$\pm$0.0148}       & 0.6219$\pm$0.0152     & 0.6229$\pm$0.0124     & 0.6204$\pm$0.0227     & 0.6219$\pm$0.0197     & 0.6227$\pm$0.0186     & 0.6204$\pm$0.0182     & 0.6238$\pm$0.0151		\\
          \hline
          \multicolumn{10}{l}{The greater the value, the better the performs}
      \end{tabular}
  }
\end{table}
In fact, we can use more label information in multi-label learning methods, 
instead of just calculating loss. 
When we construct the neighborhood graph, 
we integrate the information of features and labels, 
which makes the neighbors of samples more credible 
and the local structure more accurate.

To describe the performance of different methods 
when selecting different numbers of features on different data sets, 
these results are visualized. For details, please refer to Fig. 1 to Fig. 5. 
It can be seen that the proposed methods are 
among the best in almost all data sets and evaluation metrics.

Although various methods have been widely used, 
there are some defects at present. 
For example, in Fig.2 (e), 
the performance of each method does not improve with 
the increase of the number of selected features, 
which is mainly due to the imbalance of labels. 
The model will focus on predicting the primary labels, 
thus ignoring the secondary labels. 
In extreme cases, when the instances are concentrated on a few labels, 
the model will predict all the values of these labels as 1, 
while all the other labels are predicted as 0. 
At this time, no matter how many features are increased, 
their performance cannot be improved.

\subsection{Sensitivity analysis}
There are several parameters in the proposed method, 
such as $\alpha$, $\beta$, etc. 
We designed experiments to verify 
how these parameters affect the performance of the model.
Figs. 6 analyze the sensitivity of Average precision to 
parameters $\alpha$ and $\beta$ by contour map. 
The darker the color, the greater the value.
In Fig. 7, we fixed the parameter of $\alpha$ and $\beta$ as $0.1$ 
and $10$ respectively. 
On these data sets, 
it is obvious that the contour lines have shown a non-strip distribution, 
which is caused by selecting different $\rho$ values. 
The result verifies the necessity of introducing joint sparse regularization.

In addition, we have one additional parameters, the number of random walks. 
In fact, when we set the number of random walk steps between 
\{30, 40, 50, 60, 70, 80\}, 
the performance of the model is basically in a stable state, 
and the evaluation metrics in various aspects will not fluctuate greatly. 
This shows that the random walk strategy is robust and superior 
to other methods.

\subsection{Significance test}

In order to test whether the proposed method is 
statistically significantly superior to other methods, 
we performed bonferroni-dunn test, and the results are shown in Fig. 8. 
The results show that when the significance level is set as 0.05, 
MSFS is significantly superior to almost all other methods used 
for comparison except E-net method. 
Even though the statistical difference between MSFS 
and E-net is not significant, 
MSFS always performs better on the 11 data sets we experimented with.

\section*{Conclusions\label{sec6}}

In this paper, we construct a new feature selection model 
in the context of multi-label learning, 
which introduces both manifold regularization and joint sparse regularization. 
In the manifold regularization constraint, 
we combine the information of features and labels 
to establish a joint similarity matrix, 
which measures the distance between instances better 
and effectively avoids the occurrence of the "short circuit". 
By normalizing the rows of the joint similarity matrix, 
it has the properties of a random matrix. 
Therefore, we can implement a random walk strategy to further 
extract highly robust local structures. 
It is verified by experiments that the model can extract a stable 
and effcient structure with fewer transfer steps. 
To make the coeffcient matrix sparse, 
we balance the $\ell_{F}$ regularization and the $\ell_{2,1}$ regularization, 
so that the model can comprehensively consider 
the situation where multivariable work together. 
Compared with the previous methods, 
the proposed method has higher accuracy and lower loss 
due to introduce the joint similarity matrix 
and the joint sparse constraints. 
However, this method has a shortcoming, that is, 
the complexity of the algorithm is relatively higher 
due to the need to implement the state transition process. 
If the data set is very large, then the algorithm will 
have higher requirements on the equipment, 
and the effciency of the algorithm will be much lower.

In real world, data labels are often unavailable or expensive to get, 
which makes supervised learning hard to achieve. 
Therefore, our next work mainly focuses on feature selection 
in the field of semi-supervised learning. 
What's more, the label imbalance usually seriously affects 
the accuracy of the inference results, 
so we will also consider the problem of label imbalance.

\section*{Declarations\label{sec:7}}

This research did not receive any specific grant from funding agencies 
in the public, commercial, or not-for-profit sectors.

\bibliographystyle{myjmva}
\bibliography{trial}

\begin{thebibliography}{35}
\expandafter\ifx\csname natexlab\endcsname\relax\def\natexlab#1{#1}\fi
\providecommand{\bibinfo}[2]{#2}
\ifx\xfnm\relax \def\xfnm[#1]{\unskip,\space#1}\fi
\bibitem[{Akbari and Hesamian(2019)}]{Akbari2019}
\bibinfo{author}{M.~G. Akbari}, \bibinfo{author}{G.~Hesamian},
  \bibinfo{title}{Elastic net oriented to fuzzy semiparametric regression model
  with fuzzy explanatory variables and fuzzy responses}, \bibinfo{journal}{IEEE
  Transactions on Fuzzy Systems} \bibinfo{volume}{27} (\bibinfo{year}{2019})
  \bibinfo{pages}{2433--2442}.
\bibitem[{Balasubramanian and Schwartz(2002)}]{Balasubramanian2002-Isomap}
\bibinfo{author}{M.~Balasubramanian}, \bibinfo{author}{E.~L. Schwartz},
  \bibinfo{title}{The isomap algorithm and topological stability},
  \bibinfo{journal}{Science} \bibinfo{volume}{295} (\bibinfo{year}{2002})
  \bibinfo{pages}{7--7}.
\bibitem[{Cai et~al.(2010)Cai, Zhang and He}]{Cai2010-MCFS}
\bibinfo{author}{D.~Cai}, \bibinfo{author}{C.~Zhang}, \bibinfo{author}{X.~He},
  \bibinfo{title}{Unsupervised feature selection for multi-cluster data},
  \bibinfo{journal}{Proceedings of the 16th ACM SIGKDD International Conference
  on Knowledge Discovery and Data Mining}  (\bibinfo{year}{2010})
  \bibinfo{pages}{333--342}.
\bibitem[{Cai and Zhu(2018)}]{Cai2018-MSSL}
\bibinfo{author}{Z.~Cai}, \bibinfo{author}{W.~Zhu}, \bibinfo{title}{Multi-label
  feature selection via feature manifold learning and sparsity regularization},
  \bibinfo{journal}{International Journal of Machine Learning and Cybernetics}
  \bibinfo{volume}{9} (\bibinfo{year}{2018}) \bibinfo{pages}{1321--1334}.
\bibitem[{Chang et~al.(2014)Chang, Nie, Yang and Huang}]{chang2014-CSFS}
\bibinfo{author}{X.~Chang}, \bibinfo{author}{F.~Nie},
  \bibinfo{author}{Y.~Yang}, \bibinfo{author}{H.~Huang}, \bibinfo{title}{A
  convex formulation for semi-supervised multi-label feature selection},
  \bibinfo{journal}{Proceedings of the AAAI Conference on Artificial
  Intelligence} \bibinfo{volume}{28} (\bibinfo{year}{2014})
  \bibinfo{pages}{1171--1177}.
\bibitem[{Dembczyński et~al.(2012)Dembczyński, Waegeman, Cheng and
  Hüllermeier}]{Dembczynski2012}
\bibinfo{author}{K.~Dembczyński}, \bibinfo{author}{W.~Waegeman},
  \bibinfo{author}{W.~Cheng}, \bibinfo{author}{E.~Hüllermeier},
  \bibinfo{title}{On label dependence and loss minimization in multi-label
  classification}, \bibinfo{journal}{Machine Learning} \bibinfo{volume}{88}
  (\bibinfo{year}{2012}) \bibinfo{pages}{5--45}.
\bibitem[{Donoho et~al.(2000)}]{Donoho2000}
\bibinfo{author}{D.~L. Donoho}, et~al., \bibinfo{title}{High-dimensional data
  analysis: The curses and blessings of dimensionality}, \bibinfo{journal}{AMS
  math challenges lecture} \bibinfo{volume}{1} (\bibinfo{year}{2000})
  \bibinfo{pages}{32}.
\bibitem[{Hamers et~al.(1989)}]{Hamers1989}
\bibinfo{author}{L.~Hamers}, et~al., \bibinfo{title}{Similarity measures in
  scientometric research: The jaccard index versus salton's cosine formula.},
  \bibinfo{journal}{Information Processing and Management} \bibinfo{volume}{25}
  (\bibinfo{year}{1989}) \bibinfo{pages}{315--18}.
\bibitem[{He and Niyogi(2004)}]{He2004-LPP}
\bibinfo{author}{X.~He}, \bibinfo{author}{P.~Niyogi}, \bibinfo{title}{Locality
  preserving projections}, \bibinfo{journal}{Advances in neural information
  processing systems} \bibinfo{volume}{16} (\bibinfo{year}{2004})
  \bibinfo{pages}{153--160}.
\bibitem[{Hoerl and Kennard(1970)}]{Hoerl1970-Ridge}
\bibinfo{author}{A.~E. Hoerl}, \bibinfo{author}{R.~W. Kennard},
  \bibinfo{title}{Ridge regression: Biased estimation for nonorthogonal
  problems}, \bibinfo{journal}{Technometrics} \bibinfo{volume}{12}
  (\bibinfo{year}{1970}) \bibinfo{pages}{55--67}.
\bibitem[{Huang et~al.(2021)Huang, Chen, Zhao and Chen}]{Huang2021}
\bibinfo{author}{A.~Huang}, \bibinfo{author}{W.~Chen},
  \bibinfo{author}{T.~Zhao}, \bibinfo{author}{C.~W. Chen},
  \bibinfo{title}{Joint learning of latent similarity and local embedding for
  multi-view clustering}, \bibinfo{journal}{IEEE Transactions on Image
  Processing} \bibinfo{volume}{30} (\bibinfo{year}{2021})
  \bibinfo{pages}{6772--6784}.
\bibitem[{Johnstone and Titterington(2009)}]{Johnstone2009}
\bibinfo{author}{I.~M. Johnstone}, \bibinfo{author}{D.~M. Titterington},
  \bibinfo{title}{Statistical challenges of high-dimensional data},
  \bibinfo{journal}{Philosophical Transactions of the Royal Society A:
  Mathematical, Physical and Engineering Sciences} \bibinfo{volume}{367}
  (\bibinfo{year}{2009}) \bibinfo{pages}{4237--4253}.
\bibitem[{Kang et~al.(2006)Kang, Jin and Sukthankar}]{Kang2006-CLP}
\bibinfo{author}{F.~Kang}, \bibinfo{author}{R.~Jin},
  \bibinfo{author}{R.~Sukthankar}, \bibinfo{title}{Correlated label propagation
  with application to multi-label learning}, \bibinfo{journal}{2006 IEEE
  Computer Society Conference on Computer Vision and Pattern Recognition
  (CVPR'06)} \bibinfo{volume}{2} (\bibinfo{year}{2006})
  \bibinfo{pages}{1719--1726}.
\bibitem[{Kosub(2019)}]{Kosub2019}
\bibinfo{author}{S.~Kosub}, \bibinfo{title}{A note on the triangle inequality
  for the jaccard distance}, \bibinfo{journal}{Pattern Recognition Letters}
  \bibinfo{volume}{120} (\bibinfo{year}{2019}) \bibinfo{pages}{36--38}.
\bibitem[{Li et~al.(2015)Li, Yu, Qin, Mao and Jin}]{Li2015}
\bibinfo{author}{R.-H. Li}, \bibinfo{author}{J.~X. Yu},
  \bibinfo{author}{L.~Qin}, \bibinfo{author}{R.~Mao}, \bibinfo{author}{T.~Jin},
  \bibinfo{title}{On random walk based graph sampling}, \bibinfo{journal}{2015
  IEEE 31st International Conference on Data Engineering}
  (\bibinfo{year}{2015}) \bibinfo{pages}{927--938}.
\bibitem[{Li et~al.(2019)Li, Wu, Lin, Xie, Lv, Xu and Lui}]{Li2019}
\bibinfo{author}{Y.~Li}, \bibinfo{author}{Z.~Wu}, \bibinfo{author}{S.~Lin},
  \bibinfo{author}{H.~Xie}, \bibinfo{author}{M.~Lv}, \bibinfo{author}{Y.~Xu},
  \bibinfo{author}{J.~C. Lui}, \bibinfo{title}{Walking with perception:
  Efficient random walk sampling via common neighbor awareness},
  \bibinfo{journal}{2019 IEEE 35th International Conference on Data Engineering
  (ICDE)}  (\bibinfo{year}{2019}) \bibinfo{pages}{962--973}.
\bibitem[{Lim and Kim(2021)}]{LimH2021}
\bibinfo{author}{H.~Lim}, \bibinfo{author}{D.-W. Kim}, \bibinfo{title}{Pairwise
  dependence-based unsupervised feature selection}, \bibinfo{journal}{Pattern
  Recognition} \bibinfo{volume}{111} (\bibinfo{year}{2021})
  \bibinfo{pages}{107663}.
\bibitem[{Lu et~al.(2018)Lu, Li and Dong}]{Lu2018}
\bibinfo{author}{Q.~Lu}, \bibinfo{author}{X.~Li}, \bibinfo{author}{Y.~Dong},
  \bibinfo{title}{Structure preserving unsupervised feature selection},
  \bibinfo{journal}{Neurocomputing} \bibinfo{volume}{301}
  (\bibinfo{year}{2018}) \bibinfo{pages}{36--45}.
\bibitem[{Laurens van~der Maaten(2008)}]{Van2008-t-SNE}
\bibinfo{author}{G.~H. Laurens van~der Maaten}, \bibinfo{title}{Visualizing
  data using t-sne.}, \bibinfo{journal}{Journal of machine learning research}
  \bibinfo{volume}{9} (\bibinfo{year}{2008}) \bibinfo{pages}{2579--2605}.
\bibitem[{Mokhtia et~al.(2021)Mokhtia, Eftekhari and
  Saberi-Movahed}]{Mokhtia2021}
\bibinfo{author}{M.~Mokhtia}, \bibinfo{author}{M.~Eftekhari},
  \bibinfo{author}{F.~Saberi-Movahed}, \bibinfo{title}{Dual-manifold
  regularized regression models for feature selection based on hesitant fuzzy
  correlation}, \bibinfo{journal}{Knowledge-Based Systems}
  \bibinfo{volume}{229} (\bibinfo{year}{2021}) \bibinfo{pages}{107308}.
\bibitem[{Nie et~al.(2010)Nie, Huang, Cai and Ding}]{Nie2010-RFS}
\bibinfo{author}{F.~Nie}, \bibinfo{author}{H.~Huang}, \bibinfo{author}{X.~Cai},
  \bibinfo{author}{C.~Ding}, \bibinfo{title}{Efficient and robust feature
  selection via joint $\ell_{2,1}$-norms minimization},
  \bibinfo{journal}{Advances in Neural Information Processing Systems}
  \bibinfo{volume}{23} (\bibinfo{year}{2010}) \bibinfo{pages}{1813--1821}.
\bibitem[{Qi et~al.(2018)Qi, Wang, Liu, Zhang, Wang and Yi}]{Qi2018}
\bibinfo{author}{M.~Qi}, \bibinfo{author}{T.~Wang}, \bibinfo{author}{F.~Liu},
  \bibinfo{author}{B.~Zhang}, \bibinfo{author}{J.~Wang},
  \bibinfo{author}{Y.~Yi}, \bibinfo{title}{Unsupervised feature selection by
  regularized matrix factorization}, \bibinfo{journal}{Neurocomputing}
  \bibinfo{volume}{273} (\bibinfo{year}{2018}) \bibinfo{pages}{593--610}.
\bibitem[{Roweis and Saul(2000)}]{Roweis2000-LLE}
\bibinfo{author}{S.~T. Roweis}, \bibinfo{author}{L.~K. Saul},
  \bibinfo{title}{Nonlinear dimensionality reduction by locally linear
  embedding}, \bibinfo{journal}{Science} \bibinfo{volume}{290}
  (\bibinfo{year}{2000}) \bibinfo{pages}{2323--2326}.
\bibitem[{Shang et~al.(2020)Shang, Xu, Shang and Jiao}]{Shang2020}
\bibinfo{author}{R.~Shang}, \bibinfo{author}{K.~Xu},
  \bibinfo{author}{F.~Shang}, \bibinfo{author}{L.~Jiao}, \bibinfo{title}{Sparse
  and low-redundant subspace learning-based dual-graph regularized robust
  feature selection}, \bibinfo{journal}{Knowledge-Based Systems}
  \bibinfo{volume}{187} (\bibinfo{year}{2020}) \bibinfo{pages}{104830}.
\bibitem[{Tahir et~al.(2012)Tahir, Kittler and Yan}]{Tahir2012}
\bibinfo{author}{M.~A. Tahir}, \bibinfo{author}{J.~Kittler},
  \bibinfo{author}{F.~Yan}, \bibinfo{title}{Inverse random under sampling for
  class imbalance problem and its application to multi-label classification},
  \bibinfo{journal}{Pattern Recognition} \bibinfo{volume}{45}
  (\bibinfo{year}{2012}) \bibinfo{pages}{3738--3750}.
\bibitem[{Tibshirani(1996)}]{Tibshirani1996-LASSO}
\bibinfo{author}{R.~Tibshirani}, \bibinfo{title}{Regression shrinkage and
  selection via the lasso}, \bibinfo{journal}{Journal of the Royal Statistical
  Society: Series B (Methodological)} \bibinfo{volume}{58}
  (\bibinfo{year}{1996}) \bibinfo{pages}{267--288}.
\bibitem[{Vahedian et~al.(2017)Vahedian, Burke and Mobasher}]{Vahedian2017}
\bibinfo{author}{F.~Vahedian}, \bibinfo{author}{R.~Burke},
  \bibinfo{author}{B.~Mobasher}, \bibinfo{title}{Weighted random walk sampling
  for multi-relational recommendation}, \bibinfo{journal}{Proceedings of the
  25th Conference on User Modeling, Adaptation and Personalization}
  (\bibinfo{year}{2017}) \bibinfo{pages}{230--237}.
\bibitem[{Wang et~al.(2013)Wang, Tu and Tsotsos}]{Wang2013}
\bibinfo{author}{B.~Wang}, \bibinfo{author}{Z.~Tu}, \bibinfo{author}{J.~K.
  Tsotsos}, \bibinfo{title}{Dynamic label propagation for semi-supervised
  multi-class multi-label classification}, \bibinfo{journal}{Proceedings of the
  IEEE International Conference on Computer Vision (ICCV)}
  (\bibinfo{year}{2013}) \bibinfo{pages}{425--432}.
\bibitem[{Wei and Yu(2016)}]{Wei2016}
\bibinfo{author}{X.~Wei}, \bibinfo{author}{P.~S. Yu},
  \bibinfo{title}{Unsupervised feature selection by preserving stochastic
  neighbors}, \bibinfo{journal}{Proceedings of the 19th International
  Conference on Artificial Intelligence and Statistics} \bibinfo{volume}{51}
  (\bibinfo{year}{2016}) \bibinfo{pages}{995--1003}.
\bibitem[{Weng et~al.(2018)Weng, Lin, Wu, Li and Kang}]{Weng2018}
\bibinfo{author}{W.~Weng}, \bibinfo{author}{Y.~Lin}, \bibinfo{author}{S.~Wu},
  \bibinfo{author}{Y.~Li}, \bibinfo{author}{Y.~Kang},
  \bibinfo{title}{Multi-label learning based on label-specific features and
  local pairwise label correlation}, \bibinfo{journal}{Neurocomputing}
  \bibinfo{volume}{273} (\bibinfo{year}{2018}) \bibinfo{pages}{385--394}.
\bibitem[{Zhang et~al.(2019)Zhang, Luo, Li, Zhou and Li}]{Zhang2019-MDFS}
\bibinfo{author}{J.~Zhang}, \bibinfo{author}{Z.~Luo}, \bibinfo{author}{C.~Li},
  \bibinfo{author}{C.~Zhou}, \bibinfo{author}{S.~Li}, \bibinfo{title}{Manifold
  regularized discriminative feature selection for multi-label learning},
  \bibinfo{journal}{Pattern Recognition} \bibinfo{volume}{95}
  (\bibinfo{year}{2019}) \bibinfo{pages}{136--150}.
\bibitem[{Zhang et~al.(2020)Zhang, Li, Yang and Liu}]{Zhang2020}
\bibinfo{author}{M.-L. Zhang}, \bibinfo{author}{Y.-K. Li},
  \bibinfo{author}{H.~Yang}, \bibinfo{author}{X.-Y. Liu},
  \bibinfo{title}{Towards class-imbalance aware multi-label learning},
  \bibinfo{journal}{IEEE Transactions on Cybernetics}  (\bibinfo{year}{2020})
  \bibinfo{pages}{1--13}.
\bibitem[{Zhang and Wu(2015)}]{Zhang2014-LIFT}
\bibinfo{author}{M.-L. Zhang}, \bibinfo{author}{L.~Wu}, \bibinfo{title}{Lift:
  Multi-label learning with label-specific features}, \bibinfo{journal}{IEEE
  Transactions on Pattern Analysis and Machine Intelligence}
  \bibinfo{volume}{37} (\bibinfo{year}{2015}) \bibinfo{pages}{107--120}.
\bibitem[{Zhang and Zhou(2007)}]{Zhang2007-ML-KNN}
\bibinfo{author}{M.-L. Zhang}, \bibinfo{author}{Z.-H. Zhou},
  \bibinfo{title}{Ml-knn: A lazy learning approach to multi-label learning},
  \bibinfo{journal}{Pattern Recognition} \bibinfo{volume}{40}
  (\bibinfo{year}{2007}) \bibinfo{pages}{2038--2048}.
\bibitem[{Zou and Hastie(2005)}]{Zou2005-E-net}
\bibinfo{author}{H.~Zou}, \bibinfo{author}{T.~Hastie},
  \bibinfo{title}{Regularization and variable selection via the elastic net},
  \bibinfo{journal}{Journal of the Royal Statistical Society: Series B
  (Statistical Methodology)} \bibinfo{volume}{67} (\bibinfo{year}{2005})
  \bibinfo{pages}{301--320}.

\end{thebibliography}
\end{document}